\documentclass[11pt]{article}

\usepackage[margin=1in]{geometry}
\usepackage{graphicx}

\usepackage{amssymb}
\usepackage{latexsym}
\usepackage{amsthm}
\usepackage{amsmath}
\usepackage{amsfonts}
\usepackage{url}
\usepackage{graphicx}
\usepackage{bbm}

\usepackage{natbib}
\usepackage{epstopdf}

\usepackage[utf8]{inputenc}   %
\usepackage[T1]{fontenc}      %
\usepackage{hyperref}           %
\usepackage{url}                    %
\usepackage{booktabs}         %
\usepackage{amsfonts}         %
\usepackage{nicefrac}           %
\usepackage{microtype}        %
\usepackage{lmodern}

\newtheorem{claim}{Claim}[section]
\newtheorem{theorem}[claim]{Theorem}

\newtheorem{lemma}[claim]{Lemma}

\begin{document}
\title{Learning convex polyhedra with margin\footnote{An extended abstract of this work appeared as \emph{Learning convex polytopes with margin} in the proceedings of NIPS2018.}}

\author{Lee-Ad Gottlieb\thanks{\texttt{leead@ariel.ac.il} Ariel University, Israel.}         \and
        Eran Kaufman\thanks{\texttt{erankfmn@gmail.com} Ariel University, Israel.}             \and
        Aryeh Kontorovich\thanks{\texttt{karyeh@cs.bgu.ac.il} Ben-Gurion University, Israel.}	\and
	Gabriel Nivasch\thanks{\texttt{gabrieln@ariel.ac.il} Ariel University, Israel.}
}

\date{November 2, 2021}

\maketitle
\begin{abstract}%
  We present an improved algorithm for {\em quasi-properly} learning
  convex polyhedra in the realizable PAC setting
  from data with a margin.
  Our learning algorithm constructs a consistent polyhedron
  as an intersection of about $t \log t$ halfspaces with constant-size margins in time polynomial in $t$
  (where $t$ is the number of halfspaces forming an optimal polyhedron).

  We also identify distinct
  generalizations of the notion of margin from hyperplanes to polyhedra
  and investigate how they relate geometrically;
  this result may 
  have ramifications
  beyond the learning setting.
	
	Keywords: Classification, polyhedra, dimensionality reduction, margin.
\end{abstract}

\newcommand{\pr}[1]{\P\!\paren{#1}}
\renewcommand{\P}{\mathbb{P}}
\newcommand{\E}{\mathbb{E}}
\newcommand{\tru}[1]{{#1}^\star}
\newcommand{\emp}[1]{\hat{#1}}
\newcommand{\F}{\mathcal{F}}
\renewcommand{\H}{\calH}
\newcommand{\X}{\calX}
\newcommand{\C}{\calC}
\newcommand{\Y}{\calY}
\newcommand{\bs}{\boldsymbol}
\renewcommand{\vec}[1]{\bs{\mathrm{#1}}}
\newcommand{\AK}[1]{{\bf[AK: #1]}}
\newcommand{\hide}[1]{}
\newcommand{\oo}[1]{\frac{1}{#1}}
\newcommand{\chr}{\boldsymbol{\mathbbm{1}}} %
\newcommand{\pred}[1]{\chr_{\left\{ #1 \right\}}}
\newcommand{\nrm}[1]{\left\Vert #1 \right\Vert}
\newcommand{\trn}{^\intercal} %
\newcommand{\bx}{\vec{x}}
\newcommand{\bw}{\vec{w}}
\newcommand{\bX}{\vec{X}}
\newcommand{\inv}{^{-1}} %
\newcommand{\calA}{\mathcal{A}}
\newcommand{\calB}{\mathcal{B}}
\newcommand{\calC}{\mathcal{C}}
\newcommand{\calD}{\mathcal{D}}
\newcommand{\calE}{\mathcal{E}}
\newcommand{\calF}{\mathcal{F}}
\newcommand{\calG}{\mathcal{G}}
\newcommand{\calH}{\mathcal{H}}
\newcommand{\calI}{\mathcal{I}}
\newcommand{\calK}{\mathcal{K}}
\newcommand{\calL}{\mathcal{L}}
\newcommand{\calM}{\mathcal{M}}
\newcommand{\calN}{\mathcal{N}}
\newcommand{\calP}{\mathcal{P}}
\newcommand{\calS}{\mathcal{S}}
\newcommand{\calT}{\mathcal{T}}
\newcommand{\calV}{\mathcal{V}}
\newcommand{\calX}{\mathcal{X}}
\newcommand{\calY}{\mathcal{Y}}
\newcommand{\calZ}{\mathcal{Z}}
\newcommand{\tha}{\theta}
\newcommand{\R}{\mathbb{R}}
\newcommand{\N}{\mathbb{N}}
\newcommand{\Z}{\mathbb{Z}}
\newcommand{\Q}{\mathbb{Q}}
\newcommand{\beq}{\begin{eqnarray*}}
\newcommand{\eeq}{\end{eqnarray*}}
\newcommand{\beqn}{\begin{eqnarray}}
\newcommand{\eeqn}{\end{eqnarray}}
\newcommand{\paren}[1]{\left( #1 \right)}
\newcommand{\sqprn}[1]{\left[ #1 \right]}
\newcommand{\tlprn}[1]{\left\{ #1 \right\}}
\newcommand{\set}[1]{\tlprn{#1}}
\newcommand{\abs}[1]{\left| #1 \right|}
\newcommand{\ceil}[1]{\ensuremath{\left\lceil#1\right\rceil}}
\newcommand{\floor}[1]{\ensuremath{\left\lfloor#1\right\rfloor}}
\newcommand{\gn}{\, | \,}
\renewcommand{\th}{\ensuremath{^{\mathrm{th}}}~}
\def\longto{\mathop{\longrightarrow}\limits}
\newcommand{\ninf}{\longto_{n\to\infty}}
\newcommand{\tinf}{\longto_{t\to\infty}}
\newcommand{\eps}{\varepsilon}
\newcommand{\trace}{\operatorname{tr}}
\newcommand{\gerr}{\operatorname{err}}
\newcommand{\serr}{\widehat{\gerr}}
\newcommand{\sign}{\operatorname{sign}}

\newcommand{\conv}{\operatorname{conv}}
\newcommand{\poly}{\operatorname{poly}}
\newcommand{\fat}{\operatorname{fat}}

\newcommand{\opt}{\mathrm{opt}}
\newcommand{\VC}{{\textrm{{\tiny \textup{VC}}}}}

\newcommand{\pwrset}{\mathcal P}
\newcommand{\scp}[2]{\left\langle #1, #2 \right\rangle}
\newcommand{\haus}[2]{\delta{\left(#1,#2\right)}}
\newcommand{\hausS}[1]{\widehat\delta_{#1}}
\newcommand{\interval}[2]{\left\llbracket #1,#2\right\rrbracket}

\section{Introduction}

In the theoretical PAC learning setting \citep{DBLP:journals/cacm/Valiant84},
one considers an abstract {\em instance space} $\calX$ --- which, most commonly,
is either the Boolean cube $\set{0,1}^d$ or the Euclidean space $\R^d$.
For the former setting, an extensive literature has explored
the statistical and computational aspects of learning Boolean functions
\citep{DBLP:conf/stoc/Angluin92,DBLP:journals/tcs/HellersteinS07}.
Yet for the Euclidean setting, a corresponding theory of learning geometric concepts
is still being actively developed
\citep{DBLP:journals/algorithmica/KwekP98,DBLP:journals/jcss/JainK03,DBLP:conf/colt/AndersonGR13,DBLP:conf/colt/KaneKM13}.
The focus of this paper is the latter setting.

The simplest nontrivial geometric concept is perhaps the halfspace.
These concepts are well-known to be hard to agnostically learn
\citep{DBLP:journals/jcss/HoffgenSH95}
or even approximate
\citep{AMALDI1995181, AMALDI1998237, DBLP:journals/jcss/Ben-DavidEL03}.
Even the realizable case, while commonly described as ``solved'' via the Perceptron algorithm
or linear programming (LP), is not straightforward: The Perceptron's runtime
is quadratic in the inverse-margin, 
while solving the consistent hyperplane problem in
strongly polynomial time 
is equivalent to solving 
the general LP problem in
strongly polynomial time
\citep{Nikolov18, Chvatal18} (we include the proof in Appendix~\ref{app} for completeness). The problem of solving LP in strongly polynomial time has been open for decades \citep{DBLP:conf/innovations/BaraszV10}.
Thus, an unconditional (i.e., infinite-precision and independent of data configuration
in space)
polynomial-time solution for the consistent hyperplane problem hinges on the
strongly polynomial LP conjecture.

If we consider not a single halfspace, 
but polyhedra defined by the intersection of multiple halfspaces,
the computational and generalization bounds
rapidly become more pessimistic.
\citet{Megiddo1988}
showed that the problem of deciding
whether two sets of points in general
space can be separated by the intersection of two hyperplanes
is $\mathrm{NP}$-complete, and 
\citet{DBLP:journals/jcss/KhotS11}
showed that ``unless $\mathrm{NP}=\mathrm{RP}$, it is hard to (even) weakly
PAC-learn intersection of two halfspaces'',
even when allowed the richer class of $O(1)$ intersecting halfspaces.
Under cryptographic assumptions,
\citet{DBLP:journals/jcss/KlivansS09}
showed that learning an intersection of $n^\eps$
halfspaces is intractable regardless of hypothesis representation.

Since the margin assumption is what allows one to
find a consistent hyperplane in
provably strongly polynomial time,
it is natural to seek to generalize this scheme
to intersections of $t$ halfspaces each with margin $\gamma$;
we call this the $\gamma$-{\em margin} of a $t$-polyhedron.
This problem was considered by 
\citet{DBLP:journals/ml/ArriagaV06}, who showed that such
a polyhedron can be learned (in dimension $d$) in time
$$O(dmt) +
(t \log t)^{O( (t/\gamma^2) \log (t/\gamma))}$$
with sample complexity
$m = O\paren{(t/\gamma^2) \log (t) \log(t/\gamma)}$
(where we have taken the PAC-learning parameters $\eps,\delta$ to be constants).
In fact, they actually construct a candidate $t$-polyhedron 
as their learner; as such, their approach is an example of 
{\em proper learning}, where the hypothesis is chosen from the same
concept class as the true concept.
In contrast, 
\citet{DBLP:journals/jcss/KlivansS08}
showed that a $\gamma$-margin $t$-polyhedron
can be learned by constructing a function that approximates the polyhedron's
behavior, without actually constructing a $t$-polyhedron.
This is an example of {\em improper learning}, where the hypothesis
is selected from a broader class than that of the true concept.
They achieved a runtime of
$$
\min\set{
d (t/\gamma)^{O(t \log t \log(1/\gamma))},
d\paren{\frac{\log t}{\gamma}}^{O(\sqrt{1/\gamma} \log t)}
}
$$
and sample complexity
$m = O\paren{(1/\gamma)^{t \log t \log (1/\gamma)}}.$
Very recently, \citet{GK18} improved on this latter result, 
constructing a function hypothesis in time
$\poly(d,t^{O(1/\gamma)})$,
with sample complexity exponential in $\gamma^{-1/2}$.
Our notion of learning, termed here {\em quasi-proper},
lies somewhere between proper and improper learning in terms of stringency.
Quasi-proper learning requires an infinite nested family
of hypotheses of growing complexity: $H_1\subseteq H_2\subseteq\ldots$.
A learning algorithm is said to be quasi-proper for a class $H_k$ if it
returns an $h\in H_\ell$ for $\ell\ge k$.
As just stated,
the
definition does not formally distinguish quasi-proper and improper learning.
The former is a stricter notion because we require the $\set{H_k}$ to all belong
to the same parametric class, such as all $t$-facet polyhedra in $\R^d$,
for $t+d\le k$. This notion could be formalized beyond the case of polyhedra,
but since this paper only deals with learning the latter, such a level of generality
would be out of scope.

\paragraph{Our results.}
The central contribution of the paper is
improved algorithmic runtimes and sample complexity
for computing separating polyhedra (Theorem \ref{thm:compute}).
In contrast to the algorithm of 
\citet{DBLP:journals/ml/ArriagaV06},
whose runtime is exponential in $t/\gamma^2$, 
and to that of \citep{GK18}, whose sample complexity
is exponential in $\gamma^{-1/2}$,
we give an algorithm with polynomial sample complexity
$m = \tilde{O}(t/\gamma^2)$
and runtime only
$m^{\tilde{O}(1/\gamma^2)}$.
We accomplish this by constructing an 
$O(t \log m)$-polyhedron that correctly separates the data.
This means that our hypothesis is drawn from a broader class than
the $t$-polyhedra of 
\citet{DBLP:journals/ml/ArriagaV06}
(allowing faster runtime),
but from a much narrower class than the functions of
\citet{DBLP:journals/jcss/KlivansS08, GK18}
(allowing for improved sample complexity).

Complementing our algorithm, we provide the first nearly
matching hardness-of-approximation bounds, which (roughly) demonstrate
that an exponential dependence on $t\gamma^{-2}$ is unavoidable
for the computation of separating $t$-polyhedra, 
under standard complexity-theoretic assumptions
(Theorem \ref{thm:gc-reduction}). This motivates our
consideration of $O(t \log m)$-polyhedra instead.

Our final contribution is in introducing a new 
and intuitive notion of polyhedron margin:
This is the $\gamma$-{\em envelope} of a convex polyhedron,
defined as all points within distance $\gamma$ of the polyhedron's boundary,
as opposed to the above $\gamma$-{\em margin} of the polyhedron, 
defined as the intersection of the $\gamma$-margins of the 
hyperplanes forming the polyhedron. 
(See Figure~\ref{fig_margin_env}
for an illustration,
and Section \ref{sec:prelim} for precise definitions.)
Note that these two objects may exhibit vastly different
behaviors, particularly at a sharp intersection of two
or more hyperplanes.
It seems to us that the envelope of a polyhedron is a more natural structure than its margin:
Indeed, taking an envelope has the effect of rounding the corners of the polyhedron, and rounded polyhedra occur in nature and have been the subject of  mathematical study
\citep{Onaka05,Onaka08,Andersson08,BB17}.
Yet we find the margin
more amenable to the derivation of 
combinatorial dimension bounds
(Theorem~\ref{thm:poly-fat})
and algorithms
(Theorem \ref{thm:compute}).
We demonstrate that
results derived for margins can be adapted to apply to envelopes as well.
We prove that when confined to the unit ball, 
under natural conditions the
$\gamma$-envelope fully contains within it
the $(\gamma^2/2)$-margin (Theorem \ref{thm:lem_in_ball}),
and this implies that statistical and algorithmic results for
the latter hold for the former as well.
Using the same techniques, we also derive a related result of independent
interest concerning expanding polyhedra (Section~\ref{sec:hspeed}).
In Section~\ref{sec:exp} we present some simulation results.

\paragraph{Related work.}
When general convex bodies are considered under the uniform distribution\footnote{
  Since the concept class of convex sets has infinite VC-dimension,
  without distribution assumptions, an adversarial distribution can require
  an arbitrarily large sample size, even in $2$ dimensions \citep{Kearns97}.
  }
(over the unit ball or cube),
exponential (in dimension and accuracy)
sample-complexity bounds were obtained
by \citet{DBLP:conf/colt/RademacherG09}.
This may motivate the consideration of convex polyhedra, and indeed
a number of works have studied the problem of learning convex polyhedra, including
\citet{DBLP:conf/colt/Hegedus94,DBLP:journals/algorithmica/KwekP98,
  DBLP:conf/colt/AndersonGR13,DBLP:conf/colt/KaneKM13,DBLP:conf/nips/KantchelianTHBJT14}.
\citet{DBLP:conf/colt/Hegedus94}
examines query-based exact identification of
convex polyhedra with integer vertices,
with runtime polynomial in 
the number of vertices 
(note that the number of vertices can be exponential in the number of facets
\citep{MR1899299}).
\citet{DBLP:journals/algorithmica/KwekP98} also rely on membership queries
(see also references therein regarding prior results,
as well as strong positive results in
$2$ dimensions).
\citet{DBLP:conf/colt/AndersonGR13} efficiently approximately recover an unknown simplex
from uniform samples inside it.
\citet{DBLP:conf/colt/KaneKM13} learn halfspaces
under the log-concave distributional assumption.

The recent work of
\citet{DBLP:conf/nips/KantchelianTHBJT14}
bears a superficial resemblance to ours, but the two are actually
not directly comparable.
What they term {\em worst case margin} will indeed correspond to our {\em margin}.
However, their optimization problem is non-convex, and the solution relies on heuristics
without rigorous run-time guarantees.
Their generalization bounds exhibit a better dependence on the number $t$ of halfspaces
than our Theorem~\ref{thm:poly-fat}
($O(\sqrt t)$ vs. our
$O(t\log t)$). However, the hinge loss appearing in their Rademacher-based bound could be significantly
worse than the 0-1 error appearing in our VC-based bound.
We stress, however, that the main contribution of our paper is algorithmic rather than statistical.

\section{Preliminaries}
\label{sec:prelim}

\paragraph{Notation.}
For $\vec x\in\R^d$, 
we denote its Euclidean norm $\nrm{\vec x}_2:=\sqrt{\sum_{i=1}^d \vec x(i)^2}$ by $\nrm{\vec x}$,
and for $n\in\N$, we write $[n]:=\set{1,\ldots,n}$.
Our {\bf instance space} $\calX$ is the unit ball in $\R^d$: $\calX=\set{\vec x\in\R^d:\nrm{\vec x}\le1}$.
We assume familiarity with the notion of VC-dimension
as well as with basic PAC definitions such as {\em generalization error}
(see, e.g., \citet{Kearns97}).

\paragraph{Polyhedra.}
A (convex) polyhedron $P\subset\R^d$ is the intersection of a finite number $t$ of closed halfspaces. Each halfspace is bounded by a hyperplane $(\vec w_i,b_i)\in\R^d\times \R$ with $\nrm{\vec w_i}=1$ for each $i$, and $P$ is given by:
\beqn
\label{eq:Pdef}
P=\set{\vec x\in\R^d: \min_{i\in[t]} \vec w_i\cdot \vec x+b_i\ge0}.
\eeqn
(A polyhedron is not necessarily bounded. A bounded polyhedron is called a polytope.)
A hyperplane $(\vec w,b)$ is said to classify a point $\vec x$ as positive (resp., negative)
with margin $\gamma$ if $\vec w\cdot\vec x+b\ge\gamma$ (resp., $\le-\gamma$).
Since $\nrm{\vec w}=1$, this means that $\vec x$ is $\gamma$-far from the hyperplane
$\set{\vec x'\in\R^d: \vec w\cdot\vec x'+b=0}$, in $\ell_2$ distance.

\paragraph{Margins and envelopes.}
We consider two natural ways of extending this notion to polyhedra: the $\gamma$-margin and the $\gamma$-envelope.
For a polyhedron defined by $t$ hyperplanes as in (\ref{eq:Pdef}),
we say that $\vec x \in P$
is in the {\em inner $\gamma$-margin} of $P$ if 
$$0\le\min_{i\in[t]} \vec w_i\cdot \vec x+b_i\le\gamma$$
and that
$\vec x \notin P$
is in the {\em outer $\gamma$-margin} of $P$ if 
$$0 > \min_{i\in[t]} \vec w_i\cdot \vec x+b_i\ge-\gamma.$$
Similarly, we say that
$\vec x$
is in the {\em outer $\gamma$-envelope} of $P$ if $\vec x\notin P$ and $\inf_{\vec p\in P}\nrm{\vec 
x-\vec p}\le\gamma$
and that
$\vec x$
is in the {\em inner $\gamma$-envelope} of $P$ if $\vec x\in P$ and $\inf_{\vec p\notin P}\nrm{\vec x-\vec p}\le\gamma$.

We call the union of the inner and the outer $\gamma$-margins the {\em $\gamma$-margin}, and we denote it by $\partial P^{[\gamma]}$. Similarly, we call the union of the inner and the outer $\gamma$-envelopes the {\em $\gamma$-envelope}, and we denote it by $\partial P^{(\gamma)}$.

\begin{figure}
\centerline{\includegraphics{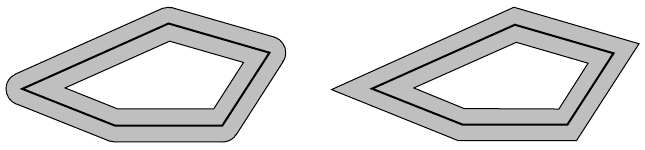}}
\caption{\label{fig_margin_env} The $\gamma$-envelope $\partial P^{(\gamma)}$ (left) and $\gamma$-margin $\partial P^{[\gamma]}$ (right) of a polyhedron $P$.}
\end{figure}

The two notions are illustrated in
Figure~\ref{fig_margin_env}.
As we show in Section~\ref{sec:nivasch} below, the inner envelope coincides with the inner margin,
but this is not the case for the outer objects: The outer margin always
contains the outer envelope, and could be of arbitrarily
larger volume.

\newcommand{\tscale}{\theta}

\paragraph{Fat-shattering dimension.}
Let $\X$ be a set and $\F\subset\R^\X$. For $\tscale>0$,
a set $S=\set{x_1,\ldots,x_m}\subset\X$
is $\tscale$-shattered by
$\F$
\beq
\sup_{r\in\R^m}
\;
\min_{y\in\set{-1,1}^m}
\;
\sup_{f\in\F}
\;
\min_{i\in[m]}
\;
y_i(f(x_i)-r_i)\ge \tscale.
\eeq
The $\tscale$-fat-shattering dimension, denoted by $\fat_\tscale(\F)$,
is the size of the largest $\tscale$-shattered set (possibly $\infty$).

\paragraph{Fat hyperplanes and polyhedra.}
Binary classification requires a collection of {\em concepts} mapping the instance space (in our case,
the unit ball in
$\R^d$)
to $\set{-1,1}$.
Any real-valued function class can be converted into a binary concept class by
composing with the sign function.
The generalization error of such a predictor can be controlled
by the fat-shattering dimension of the function class:
\begin{theorem}[\citet{299098} Theorem 4.5]
\label{thm:bst}
Let $\F$ be a collection of real-valued functions over some set $\X$,
define $D=\fat_{1/16}(\F)$, and let $P$ be some probability distribution
on $\X\times\set{-1,1}$. Suppose that $(x_i,y_i)$, $i\in[n]$, are drawn
independently according to $P$, and that some $f\in\F$ achieves 
{\em strong separation} in the sense that
\beqn
\label{eq:strong-sep}
\min_{i\in[n]} y_if(x_i)\ge 1.
\eeqn
Then with probability at least $1-\delta$,
\beq
\P\set{(x,y)\in\X\times\set{-1,1}:\sign(f(x))\neq y}
&\le&
\frac2n\paren{
D\log_2(34en/D)\log_2(578n)+\log_2(4/\delta)
}.
\eeq
\end{theorem}

In the case of hyperplanes,
the strong separation condition
(\ref{eq:strong-sep}) corresponds
to a margin of $\gamma=1/\nrm{\vec w}$.
All of our discussion of margins, envelopes, and hyperplanes continues to hold verbatim if, but rather than normalizing $\nrm{w}=1$, we normalize $\gamma=1$ and let $\nrm{w}$ vary.
It is well-known \citep[Theorem 4.6]{299098} that {\em homogeneous} hyperplanes
---
i.e., function classes of the form $\F=\set{\vec x\mapsto 
\vec w\cdot \vec x; 
\nrm{\vec x}\le 1,
\nrm{\vec w}\le R
}$
---
satisfy $\fat_\tscale(\F)\le (R/\tscale)^2$.
For this paper, we must necessarily use inhomogeneous hyperplanes,
since a general polyhedron is characterized as an intersection
of such objects.
To be more precise, the hyperplane $(\vec w,b)$
induces the function
$
f_{\vec w,b}
:\R^d\to\R$ given by $f_{\vec w,b}(\vec x)=
\vec w\cdot\vec x+b
$. Fix some $R>0$ 
and define $\F=\set{f_{\vec w,b};
\nrm{\vec w}\le R
}$.
\paragraph{Remark.}
A careful reader will notice that the conference
version of this paper,
\citet{DBLP:conf/nips/GottliebKKN18},
did not invoke fat-shattering
dimension. 
Instead, the generalization bound given there,
contained in Lemmas 1 and 2,
relied on \citet[Lemma 6]{han-kon-2017}.
Unfortunately, the latter has a mistake,
as explained 
in \citet{KA-max-fat};
hence we follow their approach
instead.
\begin{theorem}
\label{thm:poly-fat}
For $t\in\N$, let $\calP_t$ be the collection of
$t$-polytopes, i.e.,
functions $f:\R^d\to\R$ defined by
$t$ hyperplanes $(\vec w_1,b_1),\ldots,(\vec w_t,b_t)$:
\beqn
\label{eq:poly-def}
f(\vec x) = \min_{i\in[t]} \vec w_i\cdot x+b_i,
\qquad
\nrm{\vec w_i}
,
|b_i|
\le R.
\eeqn
Then
\beq
\fat_\tscale(\calP_t)
&\le&
C
t\log(t)\min\set{
\frac{R^2}{\theta^2}
,
d
}
,
\qquad
0<\tscale\le
R
,
t>1,
\eeq
where $C>0$ is a universal constant.
\end{theorem}
\begin{proof}
Observe that $
\fat_\tscale(\F)
=
\fat_\tscale(-\F)
$
and so replacing $\min$
by $\max$ in
(\ref{eq:poly-def})
will not affect $\fat_\tscale(\calP_t)$.
The result then follows directly from
\citet[Theorems 2 and 3]{KA-max-fat}.
\end{proof}

\paragraph{Dimension reduction.}
The Johnson-Lindenstrauss (JL) transform 
\citep{johnson-lindenstrauss82}
takes a set $S$ of $m$ vectors in 
$\R^d$ and projects them into 
$k = O( \eps^{-2}\log m)$
dimensions, while preserving all inter-point distances and vector
norms up to $1+\eps$ distortion. That is, if 
$f:\R^d \rightarrow \R^k$
is a linear embedding realizing the guarantees of the JL transform on $S$, 
then for every $\vec x \in S$ we have
$$(1-\eps)\|\vec x\| 
\le  \| f(\vec x) \| 
\le (1+\eps)\|\vec x\|,$$
and for every $\vec x,\vec y \in S$ we have
$$(1-\eps)\|{\vec x- \vec y}\|
\le  \| f({\vec x- \vec y}) \|
\le (1+\eps)\|{\vec x- \vec y}\|.$$
The JL transform can be realized with probability 
$1-n^{-c}$ for any constant $c \ge 1$
by a randomized linear embedding, for example a projection matrix
with entries drawn from a normal distribution
\citep{DBLP:journals/jcss/Achlioptas03}.
This embedding is {\em oblivious}, in the sense that the matrix can
be chosen without knowledge of the set $S$.

It is an easy matter to show that the JL transform can also be used to
approximately preserve distances to hyperplanes, as in the following
lemma. 

\begin{lemma}\label{lem:jl}
Let $S$ be set of $d$-dimensional vectors in the unit ball, 
$T$ be a set of normalized vectors, and 
$f:\R^d \rightarrow \R^k$
a linear embedding realizing the guarantees of the JL transform for $S \cup T$.
Then for any 
$0 < \eps < 1$
and some
$k = O((\log |S \cup T|)/\eps^2)$, 
with probability $1-|S \cup T|^{-c}$ (for any constant $c>1$)
we have for all $\vec x \in S$ and $\vec t \in T$ that
$$
f(\vec t) \cdot f(\vec x) \in \vec t \cdot \vec x \pm \eps.
$$
\end{lemma}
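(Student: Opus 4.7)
The plan is to derive approximate inner-product preservation from the approximate norm preservation guaranteed by the JL transform, using the polarization identity
\[
\vec u \cdot \vec v = \tfrac12 \bigl(\nrm{\vec u}^2 + \nrm{\vec v}^2 - \nrm{\vec u-\vec v}^2\bigr).
\]
Applied to $\vec u=\vec t$ and $\vec v=\vec x$, and also (after using linearity of $f$ to write $f(\vec t)-f(\vec x)=f(\vec t-\vec x)$) to their images under $f$, this reduces the task to controlling the three squared-norm errors $\nrm{f(\vec t)}^2-\nrm{\vec t}^2$, $\nrm{f(\vec x)}^2-\nrm{\vec x}^2$, and $\nrm{f(\vec t-\vec x)}^2-\nrm{\vec t-\vec x}^2$.

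First I would invoke the JL transform with distortion parameter $\eps' := \eps/c_0$ for a suitable constant $c_0$, applied to the finite set $S\cup T\cup\set{\vec t-\vec x:\vec x\in S,\vec t\in T}$. A standard JL statement gives the required norm-preservation for every vector in this collection simultaneously (with probability $1-|S\cup T|^{-c}$) provided $k = O((\log |S\cup T|)/\eps'^2) = O((\log |S\cup T|)/\eps^2)$. From $(1-\eps')\nrm{\vec u}\le\nrm{f(\vec u)}\le(1+\eps')\nrm{\vec u}$, squaring yields $\bigl|\nrm{f(\vec u)}^2-\nrm{\vec u}^2\bigr|\le 3\eps'\nrm{\vec u}^2$ for $\eps'\le 1$.

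Next I would substitute these bounds into the polarization expressions. Because $\vec t\in T$ is normalized and $\vec x\in S$ lies in the unit ball, we have $\nrm{\vec t}^2=1$, $\nrm{\vec x}^2\le 1$, and $\nrm{\vec t-\vec x}^2\le(\nrm{\vec t}+\nrm{\vec x})^2\le 4$. Hence
\[
\bigl|f(\vec t)\cdot f(\vec x)-\vec t\cdot\vec x\bigr|
\le \tfrac12\bigl(3\eps'\cdot 1 + 3\eps'\cdot 1 + 3\eps'\cdot 4\bigr)
= 9\eps',
\]
so choosing $c_0=9$ (i.e., $\eps'=\eps/9$) yields the desired inequality $\bigl|f(\vec t)\cdot f(\vec x)-\vec t\cdot\vec x\bigr|\le\eps$ uniformly over all $(\vec x,\vec t)\in S\times T$. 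The dimension bound $k=O((\log|S\cup T|)/\eps^2)$ is unchanged by absorbing this constant.

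There is no real obstacle here; the main care is bookkeeping. One should note that augmenting the set to include all pairwise differences $\vec t-\vec x$ multiplies the cardinality by at most $|S\cup T|^2$, which only affects $k$ by a constant factor inside the logarithm, and the failure probability can be absorbed into the $|S\cup T|^{-c}$ bound by a union bound and adjustment of the JL constant. Linearity of $f$ is essential for the identity $f(\vec t)-f(\vec x)=f(\vec t-\vec x)$, which is what lets us apply the norm-preservation guarantee to the embedded difference without needing any additional randomness.
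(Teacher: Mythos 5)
Your proof is correct and follows essentially the same route as the paper: both use the polarization identity to reduce inner-product preservation to norm preservation of $\vec t$, $\vec x$, and $\vec t-\vec x$ under the JL map, then absorb constants by rescaling the distortion parameter (you take $\eps'=\eps/9$, the paper uses $\eps'=\eps/5$ via a slightly different bounding of the cross terms). The only cosmetic difference is that you explicitly augment the JL point set with the pairwise differences, whereas the paper invokes the inter-point-distance guarantee of JL on $S\cup T$ directly; these are equivalent, and your symmetric bound $\bigl|\nrm{f(\vec u)}^2-\nrm{\vec u}^2\bigr|\le 3\eps'\nrm{\vec u}^2$ makes the two-sided conclusion slightly more transparent than the paper's one-sided-then-symmetrize argument.
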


\begin{proof}
Let the constant in $k$ be chosen so that the JL transform preserves distances
and norms among $S \cup T$ within a factor $1+\eps'$ for
$\eps' = \eps/5$.
By the guarantees of the JL transform for the chosen value of $k$, we have that
\begin{eqnarray*}
f(\vec t) \cdot f(\vec x)
	&=& \frac{1}{2}  \bigl[\|f(\vec t)\|^2 + \|f(\vec x)\|^2 - \| f(\vec t) - f(\vec x) \|^2\bigr]	\\
	&\le& \frac{1}{2}
		\bigl[ (1+\eps')^2(\|\vec t\|^2 + \|\vec x\|^2) 
		- (1-\eps')^2 \| {\vec t - \vec x} \|^2\bigr]		\\
	&<& \frac{1}{2}
		\bigl[ (1+3\eps')(\|\vec t\|^2 + \|\vec x\|^2) 
		- (1-2\eps') \| {\vec t - \vec x} \|^2\bigr]		\\
	&<& \frac{1}{2}
		\bigl[5\eps' (\|\vec t\|^2 + \|\vec x\|^2)
		+  {2\vec t \cdot \vec x}\bigr]				\\
	&\le& 5\eps' + {\vec t \cdot \vec x}.		\\
	&=& \eps + {\vec t \cdot \vec x}.	
\end{eqnarray*}
A similar argument gives that
$f(\vec t) \cdot f(\vec x) > -\eps + {\vec t \cdot \vec x}$.
\end{proof}

\section{Computing and learning separating polyhedra}
\label{sec:adi}

In this section, we present algorithms to compute and learn 
$\gamma$-fat $t$-polyhedra. We prove hardness results for this 
problem, 
and then present our algorithms.

\subsection{Hardness}
We show that computing separating polyhedra is $\mathrm{NP}$-hard, and 
even hard to approximate. Further, we will use the 
Exponential Time Hypothesis (ETH) to justify near-exponential 
dependence on $t \gamma^{-2}$ for exact algorithms for this problem.

A consequence of ETH is that the maximum independent set
and minimum graph coloring problems cannot be solved in fewer than $c^n$ operations,
for some constant $c$ \citep{CFKLMPPS-15}.
This does not necessary imply that {\em approximating}
these problems requires $c^n$ operations: As hardness-of-approximation 
results utilize polynomial-time reductions, ETH implies only that the
runtime is exponential in some polynomial in $n$.
However, \cite{BEKP-15} have shown that under ETH, neither independent set nor
coloring admit an $r$-approximation in time 
$O(2^{n^{1-\delta}})$ for any constant $r$ and $\delta > 0$
(see \cite{BCLNN-19} for upper bounds).

We begin by considering the case of a single hyperplane, as this task is a basic tool
of the algorithms of Section \ref{sec:algorithms}.  
The following preliminary lemma builds upon 
\citet[Theorem 10]{AMALDI1995181}.

\begin{lemma}\label{lem:is-reduction}
Given a labelled point set $S$ ($n=|S|$)
with $p$ negative points, let $h^*$ be a hyperplane that places 
all positive points of $S$ on its positive side, and maximizes the number
of negative points on its negative size --- let $\opt$ be the number of these
negative points. Then
\begin{enumerate}
\item
It is $\mathrm{NP}$-hard to find a hyperplane $\tilde{h}$ consistent with all positive
points, and which places at least $\opt/p^{1-o(1)}$ negative points on
the negative side of $\tilde{h}$. This holds when the optimal hyperplane has 
margin $\gamma = \frac{1}{4 \sqrt{\opt}}$ or less with respect to the correctly
classified points.
\item
Under ETH, a hyperplane consistent with all positive points and with $\opt$ negative 
points cannot be computed in time less than $c^{1/(16\gamma^2)}$ for some constant $c$.
A hyperplane consistent with all positive points and with at least $r \opt$ negative 
points cannot be computed in time  
$O \left( 2^{(1/(16\gamma^2))^{1-\delta}} \right)$
for any constants $0 < r,\delta < 1$.
\end{enumerate}
\end{lemma}

\begin{proof}
We reduce from maximum independent set, which for $p$ vertices 
is hard to approximate to
within $p^{1-o(1)}$ \citep{Zuckerman07}. Given a graph $G=(V,E)$, 
for each vetex $v_i \in V$ place a negative point on the basis vector
$\vec e_i$. Now place a positive point at the origin, and for each 
edge $(v_i,v_j) \in E$, place a positive point at $(\vec e_i+\vec e_j)/2$.

Consider a hyperplane consistent with the positive points and placing $\opt$ 
negative points on the negative side: These negative points must represent
an independent set in $G$, for if $(v_i,v_j) \in E$, then by construction
the midpoint of $\vec e_i, \vec e_j$ is positive, and so both 
$\vec e_i, \vec e_j$ cannot lie
on the negative side of the hyperplane.

Likewise, if $G$ contained an independent set $V' \subset V$ of size $\opt$, 
then we consider the hyperplane defined by the equation 
$\vec w \cdot \vec x + \frac{3}{4\sqrt{\opt}} = 0$,
where coordinate 
$\vec w(j)=-\frac{1}{\sqrt{\opt}}$
if $\vec v_j \in V'$ 
and 
$\vec w(j)=0$
otherwise.
It is easily verified that the distance from the hyperplane to a
correctly classified negative point (i.e.\ a basis vector) is 
$-\frac{1}{\sqrt{\opt}} + \frac{3}{4\sqrt{\opt}} = - \frac{1}{4 \sqrt{\opt}}$, 
to the origin is $\frac{3}{4 \sqrt{\opt}}$, 
and to all positive points is at least 
$-\frac{1}{2\sqrt{\opt}} + \frac{3}{4\sqrt{\opt}} = \frac{1}{4 \sqrt{\opt}}$.
The first item follows.

For the second item, as the above reduction yields margin
$\gamma = \frac{1}{4 \sqrt{\opt}} \ge \frac{1}{4 \sqrt{p}}$,
by ETH we cannot compute an exact solution in time less than
$c^p \ge c^{1/(16\gamma^2)}$,
or an approximation solution in time of the order
$2^{p^{1-\delta}} \ge 2^{(1/(16\gamma^2))^{1-\delta}}$.
\end{proof}

We can now extend the above results for hyperplanes to similar ones 
for polyhedra:

\begin{theorem}\label{thm:gc-reduction}
Given a labelled point set $S$ ($n=|S|$)
with $p$ negative points, let $H^*$ be a collection of $t$ 
halfspaces whose intersection partitions $S$ into positive and negative
sets. Then 
\begin{enumerate}
\item
It is $\mathrm{NP}$-hard to find a collection $\tilde{H}$ of size less than
$tp^{1-o(1)}$ whose intersection also partitions $S$ into positive and negative sets. 
This holds when the polyhedron implied by $H^*$ has margin
$\gamma = \frac{1}{4\sqrt{p/t}}$ or less.
\item
Under ETH, a collection of $t$ halfspaces whose intersection
partitions $S$ into positive and negative sets
cannot be computed in time less than $c^{1/(16\gamma^2)}$ 
for some constant $c$, nor in time 
$O \left( 2^{(t/(16\gamma^2))^{1-\delta}} \right)$
for any constant $\delta > 0$.
\end{enumerate}
\end{theorem}

\begin{proof}
The reduction is from minimum coloring, which is hard to approximate
within a factor of $n^{1-o(1)}$ \citep{Zuckerman07}.
The construction is identical to that of the proof of 
Lemma \ref{lem:is-reduction}:
Given a graph $G=(V,E)$,
for each vetex $v_i \in V$ place a negative point on the basis vector
$\vec e_i$. Now place a positive point at the origin, and for each
edge $(v_i,v_j) \in E$, place a positive point at $(\vec e_i+\vec e_j)/2$.

As above, a hyperplane consistent with the positive points and placing $m$
negative points on the negative side corresponds to an independent set
of size $m$ in $G$, for if $(v_i,v_j) \in E$, then by construction
the midpoint of $\vec e_i, \vec e_j$ is positive, and so both
$\vec e_i, \vec e_j$ cannot lie on the negative side of the hyperplane.
Then a set of $t$ hyperplanes whose intersection partitions $S$ into 
positive and negative sets implies a $t$-coloring on $G$.

Similarly, and as above, for any independent set $V'$ of size $m$ in $G$, 
there is a hyperplane that is consistent with all positive points and
with the $m$ negative points corresponding to $V'$, while also 
achieving margin 
$\frac{1}{4 \sqrt{m}} \ge \frac{1}{4 \sqrt{p}}$
on these points.
So a $t$ coloring on $G$ implies a set of $t$ hyperplanes whose intersection 
(a polyhedron with margin at least $\frac{1}{4 \sqrt{p}}$)
partitions $S$ into positive and negative sets.
To enlarge the margin further, we stipulate that no color in the solution
represent more than $p/t$ vertices;
if a color in the optimal $t$-coloring of $G$ covers more than $p/t$ vertices,
we replace it by a minimal set of colors, each coloring no more than $p/t$ 
vertices. This increases the total number of colors to at most $2t$,
but does not affect the hardness-of-approximation result.
The first item follows.

For the second item, as the above reduction yields margin
$\gamma \ge \frac{1}{4 \sqrt{p}}$,
by ETH we cannot compute an exact solution in time less than
$c^p \ge c^{1/(16\gamma^2)}$.
Likewise, as above there exists a solution polyhedron
formed by at most $2t$ halfspaces and achieving margin
$\gamma \ge \frac{1}{4 \sqrt{p/t}}$,
but by ETH, finding any solution of $rt$ halfspaces 
for constant $r>1$
cannot be computed in time of the order
$2^{p^{1-\delta}} \ge 2^{(t/(16\gamma^2))^{1-\delta}}$.
\end{proof}

Theorem \ref{thm:gc-reduction} roughly justifies the exponential dependence 
on $t\gamma^{-2}$ in the algorithm of \citet{DBLP:journals/ml/ArriagaV06},
and implies that to avoid an exponential dependence on $t$ in the runtime, 
we should consider broader hypothesis classes, 
for example $O(t \log m)$-polyhedra.
We do this in the next section.

\subsection{Algorithms}\label{sec:algorithms}

Here we present algorithms for computing polyhedra, and use them to
give an efficient algorithm for learning polyhedra.

In what follows, we give two algorithms inspired by the work of
\citet{DBLP:journals/ml/ArriagaV06}.
Both have runtime faster than the algorithm of
\citet{DBLP:journals/ml/ArriagaV06},
and the
second is only polynomial in $t$.
The underlying idea of our algorithms is to project the points 
from their high-dimensional origin space into
a low-dimensional target space.
We can find a good halfspace in the target space using a brute-force method, 
and then identify a halfspace in the origin
space which is consistent with the halfspace of the target space. 
We identify multiple such halfspaces in the origin space,
and their intersection yields the solution polyhedron.
Crucially, we choose these halfspaces greedily, 
at each step selecting the one which maximizes the number of negative points excluded from the 
current polyhedron.

\begin{theorem}\label{thm:compute}
Given a labelled point set $S$ ($n=|S|$) for which some 
$\gamma$-fat $t$-polyhedron correctly separates the positive and negative points
(i.e., the polyhedron is {\em consistent}),
we can compute the following with high probability:
\begin{enumerate}
\item
A consistent $(\gamma/4)$-fat $t$-polyhedron in time
$n^{O(t \gamma^{-2} \log (1/\gamma))}$.
\item
A consistent $(\gamma/4)$-fat $O(t \log n)$-polyhedron in time 
$n^{O(\gamma^{-2} \log (1/\gamma))}$.
\end{enumerate}
\end{theorem}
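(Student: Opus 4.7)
My plan is to combine a Johnson--Lindenstrauss projection with enumeration over a net in the reduced space for part~1, and to replace the exhaustive $t$-tuple search by a greedy set-cover-style construction for part~2. First, I would apply the JL map $f:\R^d\to\R^k$ of Lemma~\ref{lem:jl} to $S$ together with the unknown unit normals $\vec w_1,\ldots,\vec w_t$ of a consistent $\gamma$-fat $t$-polytope, with distortion parameter a small absolute-constant multiple of $\gamma$ and target dimension $k = O(\gamma^{-2}\log n)$. With high probability, $|f(\vec w_i)\cdot f(\vec x)-\vec w_i\cdot\vec x|\le \gamma/8$ for every sample point $\vec x\in S$, so the renormalized projected normals together with the original offsets give a $(7\gamma/8)$-fat consistent polytope for $f(S)$.

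Next, I would build a $(\gamma/c)$-net $\mathcal N$ on the unit sphere of $\R^k$ together with a $(\gamma/c)$-grid of offsets in $[-1,1]$, for a suitable constant $c$. A standard volume bound gives $|\mathcal N|=(O(1)/\gamma)^k$, so the total number of candidate halfspaces in the reduced space is $(1/\gamma)^{O(k)}=n^{O(\gamma^{-2}\log(1/\gamma))}$, and for each of the $t$ true projected normals some candidate approximates it closely enough to still classify $f(S)$ correctly with margin at least $\gamma/2$. For part~1, I would enumerate all $t$-tuples of candidates (at most $n^{O(t\gamma^{-2}\log(1/\gamma))}$) and test each for consistency in time $O(tn)$. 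Lifting a consistent $t$-tuple back to $\R^d$, via the transpose of the JL matrix with re-normalization, costs one more constant factor of $\gamma$ in the margin, yielding a consistent $(\gamma/4)$-fat $t$-polytope.

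For part~2, I would replace the $t$-tuple enumeration by a greedy loop: at each step, among all candidate halfspaces that classify every positive point correctly with margin $\ge \gamma/2$, pick one placing on its negative side (with margin) the largest number of not-yet-covered negative points; repeat until all negatives are covered. Because the $t$ halfspaces of the optimal projected polytope are each positive-consistent and jointly cover every negative point, averaging yields that in every iteration some candidate covers a $\ge 1/t$ fraction of the remaining uncovered negatives, so a standard greedy-cover analysis terminates after $O(t\log n)$ rounds. The total runtime is $O(t\log n)$ times the cost of scanning the $n^{O(\gamma^{-2}\log(1/\gamma))}$ candidates, which matches the claimed bound.

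The main obstacle is the careful balancing of margin losses through the three lossy operations, namely the JL distortion, the $(\gamma/c)$-net resolution, and the lift back to $\R^d$, each of which subtracts a constant multiple of $\gamma$ from the available margin; the constants must be tuned so that $\gamma/4$ survives at the end. A secondary subtlety is verifying that the averaging argument in part~2 still yields the $1/t$-fraction guarantee when the candidate set is restricted to grid halfspaces rather than arbitrary ones; this follows because the net approximant of each true $f(\vec w_i)$ misclassifies essentially no points, by the margin assumption together with the net resolution.
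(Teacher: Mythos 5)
Your JL-plus-net-plus-enumeration (part 1) and JL-plus-net-plus-greedy (part 2) outline mirrors the paper's structure, and your greedy averaging argument (each round some candidate covers $\ge 1/t$ of the remaining negatives, so $O(t\log n)$ rounds suffice) is exactly the paper's argument. The genuine gap is in the lifting step. You propose recovering the $d$-dimensional halfspace by applying the transpose of the JL matrix (with re-normalization) to the chosen $k$-dimensional candidate. If $A$ is the (scaled) JL matrix, then indeed $(A\trn\vec v)\cdot\vec x = \vec v\cdot A\vec x$ exactly, so \emph{consistency} survives the lift; but for a standard JL map, $\|A\trn\vec v\|$ concentrates near $\sqrt{d/k}$, so after re-normalizing $A\trn\vec v$ to a unit vector the margin of every lifted halfspace shrinks by a factor $\approx\sqrt{k/d}$, which is not bounded by a constant. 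The theorem asserts a $(\gamma/4)$-\emph{fat} polytope in $\R^d$, so this loss is fatal: the lifted polytope is consistent but its margin is far smaller than $\gamma/4$ whenever $d\gg k$, which is precisely the regime where JL is useful.

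The paper avoids this by never lifting the projected normal directly. Instead, for each candidate $(\vec v,b')$ in the reduced space, it forms the subsample $S'\subset S$ of points $\vec x$ with $|\vec v\cdot f(\vec x)+b'|\ge 3\gamma/4$; the JL and net guarantees imply $S'$ is separable \emph{in the original space} by the true $d$-dimensional hyperplane $(\vec w_i,b_i)$ with margin $\Theta(\gamma)$. Running the Perceptron on $S'$ (in $\R^d$, in time $O(dn\gamma^{-2})$) then produces a genuine $d$-dimensional ``mirror'' halfspace; these mirrors are the building blocks of both the brute-force $t$-tuple search and the greedy cover. In short: the reduced space is used only to \emph{identify} which subsets of $S$ each face should separate, while the face itself is re-learned in $\R^d$ where the margin is native. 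To repair your proposal, replace the transpose-lift with this restrict-and-relearn step; the rest of your argument (constant tuning, enumeration count $n^{O(t\gamma^{-2}\log(1/\gamma))}$, greedy termination) then goes through essentially as you wrote it.
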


Before proving Theorem \ref{thm:compute}, we will need a preliminary lemma:

\begin{lemma}\label{lem:net}
Given any $0<\delta<1$, there exists a set $V$ of 
unit vectors of size
$|V| = \delta^{-O(d)}$
with the following property:
For any unit vector $\vec w$, there exists some $\vec v \in V$ that satisfies 
$\vec v \cdot \vec x \in \vec w \cdot \vec x \pm \delta$
for all vectors $\vec x$ with $\|\vec x\| \le 1$.
The set $V$ can be constructed in time
$\delta^{-O(d)}$ with high probability.
\end{lemma}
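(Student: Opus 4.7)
The plan is to reduce the problem to constructing a Euclidean $\delta$-net of the unit sphere $S^{d-1}$, and then invoke standard covering-number bounds to get both the size and the running time.

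First I would observe that Cauchy--Schwarz does all the heavy lifting for the reduction: if $\vec v$ and $\vec w$ are unit vectors with $\|\vec v - \vec w\| \le \delta$, then for every $\vec x$ with $\|\vec x\| \le 1$,
\[
|\vec v \cdot \vec x - \vec w \cdot \vec x| = |(\vec v - \vec w) \cdot \vec x| \le \|\vec v - \vec w\| \cdot \|\vec x\| \le \delta.
\]
Hence, any $\delta$-net $V$ of $S^{d-1}$ in Euclidean distance automatically satisfies the inner-product approximation guarantee claimed in the lemma.

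Next I would bound $|V|$ by the standard volumetric packing argument: if $V$ is a maximal $\delta$-separated subset of $S^{d-1}$, then the open balls of radius $\delta/2$ around its points are pairwise disjoint and all contained in the ball of radius $1 + \delta/2$ around the origin. Comparing volumes gives $|V| \le (1 + 2/\delta)^d = \delta^{-O(d)}$, and by maximality $V$ is itself a $\delta$-cover of $S^{d-1}$. For the constructive part, I would use a randomized sampling procedure: draw $N = \delta^{-O(d)} \log(1/p)$ independent uniform samples from $S^{d-1}$, and argue by a union bound over an auxiliary (fixed) $(\delta/4)$-net that with probability at least $1-p$ every spherical cap of radius $\delta/2$ contains one of the samples, so the samples form a $\delta$-cover. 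Sampling a uniform unit vector can be done in $O(d)$ time (e.g.\ normalize a standard Gaussian), so the total runtime is $\delta^{-O(d)}$.

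The main obstacle is minor: one just has to make sure the union bound in the randomized construction matches the claimed $\delta^{-O(d)}$ size rather than blowing it up by an extra logarithmic or $d$-factor, which is handled by picking the sample size to be a sufficiently large constant power of $1/\delta$ times a small $\log$ factor (absorbed into the $O(d)$ in the exponent). If one instead prefers a deterministic construction, a greedy farthest-first traversal over a fine grid of resolution $\delta/(c\sqrt{d})$ inside $[-1,1]^d$, projected onto $S^{d-1}$, produces the same net in time $\delta^{-O(d)}$ with no failure probability; either version suffices for the lemma.
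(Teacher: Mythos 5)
Your proposal is correct and follows essentially the same route as the paper: reduce to a $\delta$-net of the sphere, apply the Cauchy--Schwarz step $|(\vec v - \vec w)\cdot\vec x| \le \|\vec v - \vec w\|$, bound $|V|$ by the standard volumetric estimate $(1+2/\delta)^d$, and construct $V$ by random sampling of normalized Gaussians. The only cosmetic differences are that you analyze the random construction via a union bound over an auxiliary finer net while the paper uses a coupon-collector argument (both standard and equivalent here), and you are slightly more careful to phrase things in terms of the sphere $S^{d-1}$ where the paper loosely says ``unit ball'' while still producing unit vectors.
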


This implies that if a set $S$ admits a hyperplane $(\vec w, b)$ 
with margin $\gamma$, then $S$ admits a hyperplane $(\vec v, b)$ 
(for $\vec v \in V$)
with margin at least $\gamma-\delta$.

\begin{proof}
We take $V$ to be a $\delta$-net of the unit ball, a set satisfying
that every point on the ball is within distance $\delta$ of some
point in $V$. Then 
$|V| \le (1+2/\delta)^d$
\citep[Lemma 5.2]{DBLP:journals/corr/abs-1011-3027}.
For any unit vector $\vec w$
we have for some $\vec v \in V$ 
that $\|\vec w- \vec v\| \le \delta$,
and so for any
vector $\vec x$ satisfying $\|\vec x\| \le 1$ we have 
$$|\vec w \cdot \vec x - \vec v \cdot \vec x| 
= |(\vec w-\vec v) \cdot \vec x| 
\le \| \vec w- \vec v \| 
\le \delta.$$

The net can be constructed by a randomized greedy algorithm.
By coupon-collector analysis, it suffices to sample 
$O(|V|\log |V|)$ random unit vectors. For example, each can be chosen
by sampling its coordinate from $N(0,1)$ (the standard normal distribution),
and then normalizing the vector. The resulting set contains 
within it a $\delta$-net.
\end{proof}

\paragraph{Proof of Theorem~\ref{thm:compute}}
We first apply the Johnson-Lindenstrauss transform to 
reduce the dimension of the points in $S$ to 
$k = O(\gamma^{-2}\log (n+t))
   = O(\gamma^{-2}\log n)$
while achieving the guarantees of Lemma \ref{lem:jl} 
for the points of $S$ and the $t$ halfspaces forming the optimal
$\gamma$-fat $t$-polyhedron,
with parameter $\eps = \frac{\gamma}{24}$.
This means that in the embedded space, each halfspace 
vector $\vec w$ is embedded into a vector which is not necessarily a unit vector, but is within distance $\frac{\gamma}{24}$ of some unit vector in the embedded space, and so $\|f(\vec w)\| \in 1 \pm \frac{\gamma}{24}$.
Each halfspace vector $\vec w$ also satisfies 
$f(\vec w) \cdot f(\vec x) \in \vec w \cdot \vec x \pm \frac{\gamma}{24}$
for all $x \in S$.

Now in the embedded space we extract a $\delta$-net $V$ of unit vectors by applying Lemma \ref{lem:net} with parameter
$\delta = \frac{\gamma}{12}$, 
and since the dimension of the embedded space is $k$, 
we have $|V|=\delta^{-O(k)}$.
It follows that each halfspace vector $\vec w$ is embedded to a point $f(\vec w)$ within distance 
$\eps + \delta =
\frac{\gamma}{24} + \frac{\gamma}{12} = \frac{\gamma}{8}$
of some unit vector $\vec v \in V$.
Further, the triangle inequality gives us that
$\|\vec v- f(\vec x)\| 
\in \|f(\vec w) - f(\vec x)\| \pm \|f(\vec w) - \vec v\|
\in \|f(\vec w) - f(\vec x)\| \pm \frac{\gamma}{8}$,
and so for any $\vec x \in S$ we have
$\vec v \cdot f(\vec x) 
= \frac{\|\vec v\|+\|f(\vec x)\|-\|\vec v- f(\vec x)\|}{2}
\in \frac{\|f(\vec w)\| \pm \frac{\gamma}{24} +\|f(\vec x)\|-\|f(\vec w) - f(\vec x)\| \pm \frac{\gamma}{8}}{2}
\in f(\vec w) \cdot f(\vec x) \pm \frac{\gamma}{12}.
$

Now define the set $B$ consisting of all values of the form 
$\frac{\gamma i}{6}$ 
for integer
$i = \{0,1,\ldots,\lfloor 6/\gamma \rfloor \}$.
It follows that for each $d$-dimensional halfspace 
$(\vec w,b)$ forming the original 
$\gamma$-fat $t$-polyhedron, there
is a $k$-dimensional halfspace $(\vec v,b')$ with 
$\vec v \in V$ and $b' \in B$ satisfying 
$\vec v \cdot f(\vec x) + b' 
\in 
\vec w \cdot \vec x \pm \frac{\gamma}{12} + b \pm \frac{\gamma}{6}
= \vec w \cdot \vec x + b \pm \frac{\gamma}{4}$
for every $\vec x \in S$.
Given $(\vec v, b')$, we can recover an approximation to $(\vec w, b)$ 
in the $d$-dimensional origin space thus:
Let $S' \subset S$ include all positive points in $S$, 
along with only those negative points of $S$ for which
$|\vec v \cdot f(\vec x) + b'| \ge \frac{3\gamma}{4}$,
and it follows that 
$|\vec w \cdot \vec x + b| \ge \frac{3\gamma}{4} - \frac{\gamma}{4} = \frac{\gamma}{2}$.
As $S'$ is a separable point set with margin $\Theta(\gamma)$,
we can run the Perceptron algorithm on
$S'$ in time $O(dn \gamma^{-2})$, 
and find a $d$-dimensional halfspace $\vec w'$ consistent with 
$\vec w$ on all points at distance $\frac{\gamma}{4}$ or more from $\vec w$.
We will refer to $\vec w'$ as the $d$-dimensional mirror of $\vec v$.

We compute the $d$-dimensional mirror of every vector in $V$ for
every candidate value in $B$. 
We then enumerate all possible $t$-polyhedra by taking intersections 
of all combinations of $t$ mirror halfspaces, in total time 
$$(1/\gamma)^{O(kt)}
= n^{O(t \gamma^{-2} \log (1/\gamma))},$$
and choose the best one consistent with $S$.
The first part of the theorem follows.

Better, we may give a greedy algorithm with a much improved runtime:
First note that as the intersection of $t$ halfspaces correctly 
classifies all points, the best halfspace among them correctly 
classifies at least a $(1/t)$-fraction of the negative points with margin 
$\gamma$.
Hence it suffices to find the $d$-dimensional mirror which 
is consistent with all positive points and maximizes the number
of correct negative points, all with margin $\frac{\gamma}{4}$.
We choose this halfspace, remove from $S$
the correctly classified negative points,
and iteratively search for the next best halfspace. After
$c t \log n$
iterations (for an appropriate constant $c$), the number of remaining points is 
$$n (1-\Omega(1/t))^{c t \log n}
< n e^{-\ln n}
= 1,$$
and the algorithm terminates.
\hfill$\blacksquare$\bigskip

Having given an algorithm to {\em compute} $\gamma$-fat $t$-polyhedra, 
we can now give an efficient algorithm to {\em learn} 
$\gamma$-fat $t$-polyhedra.
We sample $m$ points, and use the second item of 
Theorem \ref{thm:compute} to find a 
$(\gamma/4)$-fat $O(t \log m)$-polyhedron
consistent with the sample. 
By Theorem~\ref{thm:poly-fat}, the class of 
$\gamma$-fat $t$-polyhedra
has 
fat-shattering dimension
at scale $\tscale=1/16$
of order
$D=O(\gamma^{-2} t \log t)$.
Choosing the size of $m$ according to Theorem~\ref{thm:bst},
we conclude:
\begin{theorem}
There exists an algorithm that learns 
$\gamma$-fat $t$-polyhedra with sample complexity
\begin{equation*}
m= 
O\paren{
\frac{D}{\eps}\log^2\frac1\eps
+ \log\frac{1}{\delta}
},
\end{equation*}
in time
$m^{O((1/\gamma^2) \log (1/\gamma))}$, where
$D=O(\gamma^{-2} t \log t)$
and
$\eps,\delta$ are the desired accuracy and confidence levels.
\end{theorem}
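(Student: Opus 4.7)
The plan is to assemble Theorem~\ref{thm:compute}(2), Lemma~\ref{lem:vcd}, and Lemma~\ref{lem:gen} into a standard consistent-hypothesis PAC argument. Concretely, the learner draws an i.i.d.\ sample $S$ of size $m$ from the underlying distribution, runs the greedy algorithm of Theorem~\ref{thm:compute}(2) on $S$ to obtain a $(\gamma/4)$-fat polytope $\hat P$ with at most $O(t\log m)$ facets that is consistent with $S$, and returns $\hat P$. The runtime of this step is $m^{O(\gamma^{-2}\log(1/\gamma))}$, matching the claimed bound; all that remains is to choose $m$ large enough that consistency on $S$ translates into generalization error at most $\eps$ with probability at least $1-\delta$.

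For the VC-dimension of the hypothesis class, I would apply the second (dimension-free) part of Lemma~\ref{lem:vcd} to $(\gamma/4)$-fat $O(t\log m)$-polytopes, using the per-halfspace bound $v=(8/\gamma+1)^2 = O(1/\gamma^2)$. This gives
\[
d_\VC \;=\; O\!\left(\frac{t\log m}{\gamma^2}\cdot \log(t\log m)\right) \;=\; \tilde O\!\left(\frac{t\log m}{\gamma^2}\right).
\]
Plugging this into Lemma~\ref{lem:gen} and demanding $\gerr(\hat P)\le \eps$ reduces the problem to solving a self-referential inequality of roughly the form
\[
\eps m \;\gtrsim\; \frac{t}{\gamma^2}\,\log^2 m \;+\; \log(1/\delta),
\]
where the $\log^2 m$ factor arises from the product of the $\log m$ inside $d_\VC$ with the $\log(2em/d_\VC) = O(\log m)$ factor of the generalization bound.

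The only non-routine ingredient is inverting this transcendental inequality. Substituting the ansatz $m = c\bigl(\frac{t}{\eps\gamma^2}\log^2\frac{t}{\eps\gamma} + \frac{1}{\eps}\log\frac{1}{\delta}\bigr)$ for a sufficiently large constant $c$ yields $\log m = O(\log(t/(\eps\gamma)) + \log\log(1/\delta))$, after which a direct check verifies that the right-hand side of the inequality is dominated by $\eps m$, giving the stated sample complexity. I do not foresee any real obstacle: the theorem is essentially a combination of the algorithmic, combinatorial, and statistical tools already in place, and the main care is simply in bookkeeping the extra $\log m$ factor introduced by passing from a $t$-polytope to an $O(t\log m)$-polytope hypothesis.
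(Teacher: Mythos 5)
Your proposal follows the paper's proof essentially exactly: sample $m$ points, run Theorem~\ref{thm:compute}(2) to obtain a consistent $(\gamma/4)$-fat $O(t\log m)$-polytope, bound the VC-dimension of this broader hypothesis class via the dimension-free part of Lemma~\ref{lem:vcd}, and invert the generalization bound of Lemma~\ref{lem:gen} to solve for $m$. The paper's exposition is more terse (it even suppresses the $\log(t\log m)$ factor you correctly carry), but the argument is the same; your ansatz with $\frac{1}{\eps}\log\frac{1}{\delta}$ is, if anything, the more careful reading of Lemma~\ref{lem:gen} than the theorem's stated $\log\frac{1}{\delta}$ term.
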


\section{Polyhedron margin and envelope}
\label{sec:nivasch}

In this section, we show that the notions of margin and envelope defined in Section~\ref{sec:prelim}
are, in general, quite distinct.
Fortunately,
when confined to the unit ball $\calX$, one can be used to
approximate the other.

Given two sets $S_1,S_2\subseteq\R^d$, their \emph{Minkowski sum} is given by $S_1+S_2=\{\vec{p}+\vec{q}:\vec{p}\in S_1, \vec{q}\in S_2\}$, and their \emph{Minkowski difference} is given by $S_1-S_2 = \{\vec{p}\in \R^d: \{\vec{p}\} + S_2\subseteq S_1\}$. Let $B_\gamma = \{\vec{p}\in\R^d : \|\vec{p}\|\le \gamma\}$ be a ball of radius $\gamma$ centered at the origin.

Given a polyhedron $P\in \R^d$ an a real number $\gamma>0$, let
\begin{align*}
P^{(+\gamma)} &= P + B_\gamma,\\
P^{(-\gamma)} &= P - B_\gamma.
\end{align*}
Hence, $P^{(+\gamma)}$ and $P^{(-\gamma)}$ are the results of expanding or contracting, in a certain sense, the polyhedron $P$.

Also, let $P^{[+\gamma]}$ be the result of moving each halfspace defining a facet of $P$ outwards by distance $\gamma$, and similarly, let $P^{[-\gamma]}$ be the result of moving each such halfspace inwards by distance $\gamma$. Put differently, we can think of the halfspaces defining the facets of $P$ as moving outwards at unit speed, so $P$ expands with time. Then $P^{[\pm \gamma]}$ is $P$ at time $\pm \gamma$.

Hence, the $\gamma$-envelope of $P$ is given by $\partial P^{(\gamma)} = P^{(+\gamma)} \setminus P^{(-\gamma)}$, and the $\gamma$-margin of $P$ is given by $\partial P^{[\gamma]} = P^{[+\gamma]}\setminus P^{[-\gamma]}$. See Figure~\ref{fig_margin_env}.

\begin{lemma}\label{lemma_inner}
We have $P^{(-\gamma)} = P^{[-\gamma]}$.
\end{lemma}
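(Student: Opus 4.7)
The plan is to prove the identity $P^{(-\gamma)} = P^{[-\gamma]}$ by double inclusion, working directly from the halfspace representation $P = \{\vec{x}\in\R^d : \min_{i\in[t]} \vec{w}_i\cdot\vec{x} + b_i \ge 0\}$ with $\|\vec{w}_i\|=1$. Under this representation, moving each facet inward by $\gamma$ yields the clean formula
\[
P^{[-\gamma]} = \set{\vec{x}\in\R^d : \vec{w}_i\cdot\vec{x}+b_i \ge \gamma \text{ for all } i\in[t]},
\]
while Minkowski-difference unpacks to
\[
P^{(-\gamma)} = \set{\vec{p}\in\R^d : \vec{p}+\vec{q}\in P \text{ for every } \vec{q}\in B_\gamma}.
\]

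For the inclusion $P^{[-\gamma]} \subseteq P^{(-\gamma)}$, I would take any $\vec{p}\in P^{[-\gamma]}$ and any $\vec{q}$ with $\|\vec{q}\|\le\gamma$, and apply Cauchy--Schwarz together with $\|\vec{w}_i\|=1$ to get $\vec{w}_i\cdot\vec{q} \ge -\gamma$. Adding this to $\vec{w}_i\cdot\vec{p}+b_i \ge \gamma$ yields $\vec{w}_i\cdot(\vec{p}+\vec{q})+b_i \ge 0$ for every $i$, so $\vec{p}+\vec{q}\in P$, hence $\vec{p}\in P^{(-\gamma)}$.

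For the reverse inclusion $P^{(-\gamma)} \subseteq P^{[-\gamma]}$, the key trick is to choose, for each facet index $i$, the specific perturbation $\vec{q} = -\gamma\,\vec{w}_i$, which sits on the boundary of $B_\gamma$ since $\|\vec{w}_i\|=1$. Since $\vec{p}\in P^{(-\gamma)}$ forces $\vec{p}+\vec{q}\in P$, we get $\vec{w}_i\cdot(\vec{p}-\gamma\vec{w}_i)+b_i \ge 0$, which rearranges to $\vec{w}_i\cdot\vec{p}+b_i \ge \gamma$. Doing this for every $i\in[t]$ shows $\vec{p}\in P^{[-\gamma]}$, completing the proof.

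There is essentially no major obstacle; the only subtle point is noticing that the right witness for the harder direction is the inward normal $-\gamma\vec{w}_i$ itself, which exploits the unit-norm normalization of the $\vec{w}_i$. Everything else is a one-line application of Cauchy--Schwarz. I would also remark that this argument is symmetric in spirit to the (false) inclusion one might attempt for the outer objects, explaining why $P^{(+\gamma)}$ and $P^{[+\gamma]}$ do \emph{not} coincide: on the outer side, the relevant inequality is the sharp Cauchy--Schwarz bound in the opposite direction, which fails at reflex angles where multiple facet inequalities become simultaneously tight.
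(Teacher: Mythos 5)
Your proof is correct, and it takes a genuinely different route from the paper's. The paper argues geometrically: for $P^{[-\gamma]}\subseteq P^{(-\gamma)}$ it observes that a point at distance at least $\gamma$ from every facet hyperplane is at distance at least $\gamma$ from $\partial P$; for the reverse it supposes $\vec{p}\in P^{(-\gamma)}\setminus P^{[-\gamma]}$, takes the nearest point $\vec{q}$ on the offending hyperplane, argues that $\vec{q}$ lies off the facet and hence outside $P$, and derives a contradiction because the segment $\vec{p}\vec{q}$ must then cross $\partial P$ within distance $\gamma$ of $\vec{p}$. You replace all of this with two lines of algebra: Cauchy--Schwarz gives $\vec{w}_i\cdot\vec{q}\ge-\gamma$ for the inclusion $P^{[-\gamma]}\subseteq P^{(-\gamma)}$, and the extremal witness $\vec{q}=-\gamma\vec{w}_i$ gives the reverse. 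This is cleaner and has a side benefit: it exhibits the identity as $H_i - B_\gamma = H_i^{[-\gamma]}$ for each individual halfspace $H_i$, combined with the fact that Minkowski difference by a fixed set distributes over intersection, so $P^{(-\gamma)}=\bigcap_i (H_i - B_\gamma)$ holds for \emph{any} halfspace representation, redundant or not. This quietly resolves a small wrinkle in the paper's phrasing (the paper defines $P^{[-\gamma]}$ via facet-defining halfspaces, while your formula shrinks all $t$ constraints of the given representation; your computation shows these agree). Your closing remark is also on point: the outer analogue fails precisely because Minkowski \emph{sum} does not distribute over intersection, which is the reflex-angle phenomenon the paper depicts in Figure~\ref{fig_margin_bad_things}.
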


\begin{proof}
Each point in $P^{[-\gamma]}$ is at distance at least $\gamma$ from each hyperplane containing a facet of $P$, hence, it is at distance at least $\gamma$ from the boundary of $P$, so it is in $P^{(-\gamma)}$. Now, suppose for a contradiction that there exists a point $\vec{p}\in P^{(-\gamma)}\setminus P^{[-\gamma]}$. This means that, on the one hand, $\vec{p}$ is at distance at least $\gamma$ from every point in the boundary or the exterior of $P$, but, on the other hand, $\vec{p}$ is at distance smaller than $\gamma$ from some point $\vec{q}$ in some hyperplane that contains a facet of $P$. But such a point $\vec{q}$ lies in the boundary or the exterior of $P$. Contradiction.
\end{proof}

\begin{figure}
\centerline{\includegraphics{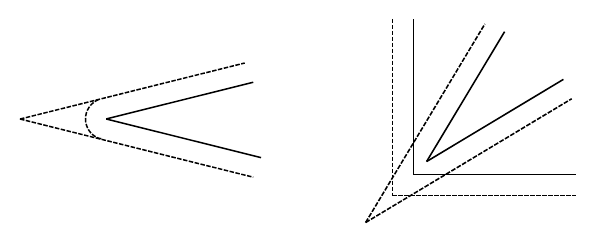}}
\caption{\label{fig_margin_bad_things}Left: $P^{[+\gamma]}$ might contain points arbitrarily far away from $P^{(+\gamma)}$; the distance becomes larger as the angle goes to $0$. Right: The $\gamma$-margin is not monotone under set containment.}
\end{figure}

However, in the other direction we have $P^{(+\gamma)} \subsetneq P^{[+\gamma]}$. Furthermore, $P^{[+\gamma]}$ might contain points arbitrarily far away from $P^{(+\gamma)}$ (see Figure~\ref{fig_margin_bad_things}, left). Moreover, the $\gamma$-margin is not monotone under set containment: There are polyhedra $Q\subseteq P$ for which $Q^{[+\gamma]}\not\subseteq P^{[+\gamma]}$ (see Figure~\ref{fig_margin_bad_things}, right).

Since the $\gamma$-margin of $P$ is not contained in the $\gamma$-envelope of $P$, we would like to find 
some sufficient condition under which, for some $\gamma'<\gamma$, the $\gamma'$-margin of $P$ is 
contained in the $\gamma$-envelope of $P$. Our solution to this problem is given in the following 
theorem. Recall that $\calX$ is the unit ball in $\R^d$.

\begin{theorem}\label{thm:lem_in_ball}
Let $P\subset \R^d$ be a polyhedron, and let $0<\gamma<1$. Suppose that $P^{[-\gamma]}\cap \calX \neq \emptyset$. Then, within $\calX$, the $(\gamma^2/2)$-margin of $P$ is contained in the $\gamma$-envelope of $P$; meaning, $\partial P^{[\gamma^2/2]} \cap \calX \subseteq \partial P^{(\gamma)}$.
\end{theorem}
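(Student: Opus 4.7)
The plan is to split the point $\vec x \in \partial P^{[\gamma^2/2]} \cap \calX$ into two cases based on whether it lies in the inner or outer $\gamma^2/2$-margin of $P$, and to use the hypothesis $P^{[-\gamma]} \cap \calX \neq \emptyset$ to produce an ``anchor'' point deep inside $P$ for the outer case.

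For the inner case ($\vec x \in P$ and $\vec x \notin P^{[-\gamma^2/2]}$), I would invoke Lemma~\ref{lemma_inner}, which gives $P^{(-\gamma^2/2)} = P^{[-\gamma^2/2]}$. Since $\vec x \notin P^{(-\gamma^2/2)}$, the open ball of radius $\gamma^2/2$ around $\vec x$ is not contained in $P$, so $\vec x$ lies within distance $\gamma^2/2 < \gamma$ of the complement of $P$. Combined with $\vec x \in P \subseteq P^{(+\gamma)}$ and $P^{(-\gamma)} \subseteq P^{(-\gamma^2/2)}$, this places $\vec x$ in the inner $\gamma$-envelope.

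The outer case ($\vec x \notin P$) is the main obstacle, and this is where the ball hypothesis comes in. Fix $\vec y \in P^{[-\gamma]} \cap \calX$. For each facet-defining hyperplane $(\vec w_i, b_i)$ of $P$, let $d_i = \vec w_i \cdot \vec x + b_i$ and $e_i = \vec w_i \cdot \vec y + b_i$. By hypothesis, $d_i \ge -\gamma^2/2$ and $e_i \ge \gamma$. Parametrize the segment from $\vec x$ to $\vec y$ by $t \in [0,1]$; the signed distance to the $i$th hyperplane is affine in $t$, and whenever $d_i < 0$ it crosses zero at
\begin{equation*}
t_i \;=\; \frac{|d_i|}{e_i + |d_i|} \;\le\; \frac{\gamma^2/2}{\gamma + \gamma^2/2} \;=\; \frac{\gamma}{2+\gamma}.
\end{equation*}
The segment enters $P$ at $t^\star = \max_i t_i$, and since $\|\vec x - \vec y\| \le 2$ (both lie in $\calX$), the entry point $\vec z \in P$ satisfies
\begin{equation*}
\|\vec x - \vec z\| \;=\; t^\star \cdot \|\vec x - \vec y\| \;\le\; \frac{2\gamma}{2+\gamma} \;\le\; \gamma.
\end{equation*}
Hence $\vec x \in P^{(+\gamma)}$, and since $\vec x \notin P \supseteq P^{(-\gamma)}$, we conclude $\vec x \in \partial P^{(\gamma)}$.

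The two cases together establish the containment. The key insight, and the main technical step, is the outer-case bound: using an interior witness $\vec y$ at ``depth'' at least $\gamma$ amplifies a margin violation of order $\gamma^2$ into a genuine proximity of order $\gamma$ to $P$ itself, via the elementary ratio $|d_i|/(e_i + |d_i|)$. The factor of $2$ coming from $\|\vec x - \vec y\| \le 2$ on the unit ball is exactly what forces the quadratic relationship $\gamma^2/2$ between margin and envelope.
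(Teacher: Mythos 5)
Your proof is correct, and it captures exactly the same geometric insight as the paper's: an interior witness $\vec y$ at depth $\gamma$, combined with the unit-ball bound $\|\vec x - \vec y\| \le 2$, forces the segment from $\vec x$ to $\vec y$ to enter $P$ within distance $\gamma$ of $\vec x$, which is precisely what yields the quadratic $\gamma^2/2$-to-$\gamma$ tradeoff. The difference is in packaging. The paper first invokes Lemma~\ref{lemma_inner} to reduce the entire theorem to the one-sided containment $P^{[+\gamma^2/2]}\cap\calX \subseteq P^{(+\gamma)}$ (since $P^{(-\gamma)} = P^{[-\gamma]} \subseteq P^{[-\gamma^2/2]}$ takes care of the inner boundary automatically), and then argues by contradiction via the expanding-polytope formalism and Lemma~\ref{lemma_no_reenter}: a point moving linearly from $\vec q$ to $\vec p$ cannot exit the growing polytope $P(t)$ and reenter it, since each signed hyperplane distance is affine in $t$. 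You instead split into inner and outer cases and, in the outer case, unwind the no-reentry observation into an explicit first-crossing computation $t_i = |d_i|/(e_i + |d_i|) \le \gamma/(2+\gamma)$, which is exactly the affineness-of-signed-distance fact made concrete. Both routes are sound; yours is more elementary and self-contained, whereas the paper's abstraction buys reuse of Lemma~\ref{lemma_no_reenter} to obtain the Hausdorff-speed monotonicity in Section~\ref{sec:hspeed}. If you want to shorten your write-up, you could adopt the paper's reduction through Lemma~\ref{lemma_inner}, after which your inner case collapses to the trivial inclusion $P \subseteq P^{(+\gamma)}$.
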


(Without the condition $P^{[-\gamma]}\cap \calX \neq \emptyset$, the theorem would be false. A counterexample is given by a polygon in the plane with a very sharp vertex---one with a very acute angle---inside the unit circle. In dimensions $3$ and larger, counterexamples are possible even if none of the dihedral angles of the polyhedron are too small. Consider for example, in dimension $3$, a polyhedron with a very sharp vertex $v$ that has an equilateral-triangle cross-section. That is, $v$ is the meeting point of three edges with very acute angles between every two edges, even though the three facets meeting at $v$ make dihedral angles close to $60$ degrees.)

The proof uses the following general observation:

\begin{lemma}\label{lemma_no_reenter}
Let $Q=Q(t)$ be an expanding polyhedron whose defining halfspaces move outwards with time, each one at its own constant speed. Let $\vec{p} = \vec{p}(t)$ be a point that moves in a straight line at constant speed. Suppose $t_1<t_2<t_3$ are such that $\vec{p}(t_1)\in Q(t_1)$ and  $\vec{p}(t_3)\in Q(t_3)$. Then $\vec{p}(t_2)\in Q(t_2)$ as well.
\end{lemma}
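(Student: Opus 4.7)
The plan is to express the containment condition $\vec{p}(t)\in Q(t)$ as a conjunction of inequalities, each of which is an affine function of $t$, and then use the fact that an affine function nonnegative at two endpoints of an interval is nonnegative throughout the interval.

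First I would parametrize. Write $Q(t)$ as the intersection of halfspaces $\{\vec{x}\in\R^d: \vec{w}_i\cdot \vec{x}+b_i(t)\ge 0\}$ for $i=1,\ldots,t$, where each $\vec{w}_i$ is a fixed unit vector and $b_i(t)=b_i(0)+s_i t$ for some $s_i\ge 0$ (outward motion at constant speed). Write the moving point as $\vec{p}(t)=\vec{p}_0+\vec{v}\,t$ for some fixed $\vec{p}_0,\vec{v}\in\R^d$. For each $i$, set
\[
f_i(t) \;=\; \vec{w}_i\cdot \vec{p}(t)+b_i(t)
\;=\; \bigl(\vec{w}_i\cdot\vec{p}_0+b_i(0)\bigr)+\bigl(\vec{w}_i\cdot\vec{v}+s_i\bigr)\,t,
\]
which is an affine function of $t$. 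By construction, $\vec{p}(t)\in Q(t)$ if and only if $f_i(t)\ge 0$ for every $i$.

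Now I would use affineness. Fix any $i$ and note that since $t_1<t_2<t_3$, we can write $t_2=\lambda t_1+(1-\lambda)t_3$ for some $\lambda\in(0,1)$. Since $f_i$ is affine,
\[
f_i(t_2)=\lambda f_i(t_1)+(1-\lambda)f_i(t_3)\;\ge\;\min\bigl\{f_i(t_1),f_i(t_3)\bigr\}.
\]
By hypothesis $\vec{p}(t_1)\in Q(t_1)$ and $\vec{p}(t_3)\in Q(t_3)$, so both $f_i(t_1)\ge 0$ and $f_i(t_3)\ge 0$, and hence $f_i(t_2)\ge 0$. Because this holds for every $i$, we conclude $\vec{p}(t_2)\in Q(t_2)$, as desired.

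There is essentially no technical obstacle here; the only thing to be a bit careful about is that the argument works per-halfspace, regardless of which halfspace (if any) is ``binding'' at each time --- the minimum over $i$ of the $f_i$'s need not itself be affine, but affineness of each individual $f_i$ is all that the convex-combination step uses. Outward motion ($s_i\ge 0$) is not actually needed for the argument; the conclusion holds for any linear motion of the halfspaces and of $\vec{p}$.
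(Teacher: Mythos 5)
Your proof is correct and formalizes exactly the one-line argument the paper gives (``$\vec p$ exits one of the halfspaces and enters it again, which is impossible''): for each fixed halfspace the signed distance $f_i(t)$ is affine in $t$, so it cannot be nonnegative at $t_1$ and $t_3$ yet negative at an intermediate $t_2$. Your added observation that outward motion ($s_i\ge 0$) is not actually needed is accurate, though the paper does not use that generality.
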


\begin{proof}
Otherwise, $\vec{p}$ exits one of the halfspaces and enters it again, which is impossible.
\end{proof}

\paragraph{Proof of Theorem~\ref{thm:lem_in_ball}}
By Lemma~\ref{lemma_inner}, it suffices to show that $P^{[+\gamma^2/2]}\cap \calX \subseteq P^{(+\gamma)}$. Hence, let $\vec{p}\in P^{[+\gamma^2/2]}\cap \calX$ and $\vec{q}\in P^{[-\gamma]}\cap \calX$. Let $s$ be the segment $\vec{p}\vec{q}$. Let $\vec{r}$ be the point in $s$ that is at distance $\gamma$ from $\vec{p}$. Suppose for a contradiction that $\vec{p}\notin P^{(+\gamma)}$. Then $\vec{r}\notin P$.
  Consider $P = P(t)$ as a polyhedron that expands with time, as above. Let $\vec{z}=\vec{z}(t)$ be a point that moves along $s$ at constant speed, such that $\vec{z}(-\gamma)=\vec{q}$ and $\vec{z}(\gamma^2/2) = \vec{p}$. Since $\|\vec{r}-\vec{q}\|\le 2$, the speed of $s$ is at most $2/\gamma$. Hence, between $t=0$ and $t=\gamma^2/2$, $\vec{z}$ moves distance at most $\gamma$, so $\vec{z}(0)$ is already between $\vec{r}$ and $\vec{p}$. In other words, $\vec{z}$ exits $P$ and reenters it, contradicting Lemma~\ref{lemma_no_reenter}.
\hfill$\blacksquare$\bigskip

It follows immediately from Theorems
\ref{thm:poly-fat}
and
\ref{thm:lem_in_ball} 
that the 
$\tscale$-fat-shattering
dimension
of the class of 
$d$-dimensional
$t$-polyhedra with envelope $\gamma$ is at most
\beq
O\paren{
t\log(t)
\min\set{
d,
\frac1{\gamma^4\theta^2}
}
}
.
\eeq
Likewise, we can approximate the optimal $t$-polyhedron with envelope $\gamma$ 
by the algorithms of Theorem \ref{thm:compute} (with parameter $\gamma' = \gamma^2/2$).

\subsection{Hausdorff speed of an expanding polyhedron}\label{sec:hspeed}

The above technique also yields a result of independent interest, regarding what we call the \emph{Hausdorff speed} of an expanding polyhedron.

Given a set $S\subseteq \R^d$ and a point $\vec{p}\in\R^d$, let $\delta_{\vec{p}}(S) = \inf_{\vec{q}\in S} \|\vec{p}-\vec{q}\|$ be the infimum of the distances between $\vec{p}$ and the points of $S$. If $S$ is closed and convex then there exists a unique point in $S$ that achieves the minimum distance $\delta_{\vec{p}}(S)$.

Given two sets $S_1, S_2\subseteq \R^d$, the \emph{Hausdorff distance} between them is defined as
\begin{equation*}
\haus{S_1}{S_2} = \max\left\{ \textstyle \sup_{\vec{p}\in S_1} \delta_{\vec{p}}(S_2), \sup_{\vec{p}\in S_2} \delta_{\vec{p}}(S_1) \right\}.
\end{equation*}

As above, let $P=P(t)$ be a polyhedron that expands with time, due to its bounding halfspaces moving outwards, each at its own constant speed. We define the \emph{Hausdorff speed} of $P$ at time $t$ by
\begin{equation*}
\lim_{\gamma\to 0} \haus{P(t)}{P(t+\gamma)}/\gamma.
\end{equation*}

It is easy to see that the Hausdorff speed of $P$ stays constant most of the time, changing only at those time moments at which $P$ undergoes a combinatorial change (some faces of $P$ disappear and others appear in their stead).

\begin{lemma}
At each combinatorial change, the Hausdorff speed of $P$ either stays the same or decreases.
\end{lemma}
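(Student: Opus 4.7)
The plan is to deduce the lemma from a stronger global fact: the Hausdorff speed of an expanding polytope is a \emph{non-increasing} function of $t$. Since this speed is piecewise constant, changing only at combinatorial events, global monotonicity immediately yields the claim.

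The key observation is a concavity property of the support function. Writing the moving halfspaces as $\{\vec{x}\in\R^d:\vec{w}_i\cdot\vec{x}\le b_i(t)\}$ with each offset $b_i(t)=b_i(0)+s_it$ affine in $t$, LP duality gives, for every fixed unit direction $\vec{w}$,
\begin{equation*}
h_{P(t)}(\vec{w}) \;=\; \sup_{\vec{p}\in P(t)}\vec{w}\cdot\vec{p} \;=\; \min\Bigl\{\vec{y}\cdot\vec{b}(t):\vec{y}\ge\vec{0},\ \textstyle\sum_i y_i\vec{w}_i=\vec{w}\Bigr\}.
\end{equation*}
For every dual-feasible $\vec{y}$, the expression $\vec{y}\cdot\vec{b}(t)$ is affine in $t$; a minimum of affine functions is concave, so $t\mapsto h_{P(t)}(\vec{w})$ is concave for every $\vec{w}$.

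Next, since $P(t)\subseteq P(t+\gamma)$ for $\gamma\ge 0$, one has the identity
\begin{equation*}
\haus{P(t)}{P(t+\gamma)} \;=\; \sup_{\|\vec{w}\|=1}\bigl(h_{P(t+\gamma)}(\vec{w})-h_{P(t)}(\vec{w})\bigr).
\end{equation*}
Concavity of $h_{P(t)}(\vec{w})$ in $t$ implies that for every fixed $\gamma>0$ and every $\vec{w}$, the increment $h_{P(t+\gamma)}(\vec{w})-h_{P(t)}(\vec{w})$ is a non-increasing function of $t$. Taking the supremum over $\vec{w}$ preserves this monotonicity, so $t\mapsto\haus{P(t)}{P(t+\gamma)}$ is non-increasing in $t$ for every fixed $\gamma>0$. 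Dividing by $\gamma$ and letting $\gamma\to 0^+$, the Hausdorff speed inherits the same monotonicity, and in particular $H(t_0^+)\le H(t_0^-)$ at every combinatorial event.

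The main potential obstacle is verifying that the LP duality argument applies uniformly in $t$ and $\vec{w}$: we need both the primal and the dual to be feasible and bounded, i.e., $P(t)$ nonempty and bounded in every unit direction $\vec{w}$. For an expanding bounded polytope this holds automatically in the regime of interest, and the concavity-based monotonicity bypasses any delicate interchange of limit and supremum in the definition of the Hausdorff speed.
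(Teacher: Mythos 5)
Your proof is correct, and it takes a genuinely different route from the paper's. The paper argues by contradiction at a single combinatorial event: it selects witnesses $\vec{p}\in P(t_1+\gamma)$, $\vec{q}\in P(t_1)$, $\vec{r}\in P(t_1-\gamma)$ realizing the putative speeds $v_2>v_1$, and shows that a point moving linearly from $\vec{r}$ at time $t_1-\gamma$ to $\vec{p}$ at time $t_1+\gamma$ would have to leave the expanding polytope and return, contradicting Lemma~\ref{lemma_no_reenter}. You instead prove the stronger global statement that the Hausdorff speed is a non-increasing function of $t$, using the classical identity $\haus{K}{L}=\sup_{\nrm{\vec w}=1}\abs{h_K(\vec w)-h_L(\vec w)}$ for convex bodies and the observation that, by LP duality, $t\mapsto h_{P(t)}(\vec w)$ is a $t$-independent minimum of functions affine in $t$, hence concave; concavity makes the forward differences $h_{P(t+\gamma)}(\vec w)-h_{P(t)}(\vec w)$ non-increasing in $t$ uniformly over $\vec w$, and the supremum and the limit $\gamma\to 0^+$ preserve that. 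Your route is cleaner and somewhat more general---it would apply to any expanding convex body whose support function is concave in time, and it sidesteps the pointwise geometric case analysis---whereas the paper's route is elementary and deliberately reuses the same ``no re-entry'' machinery that drives Theorem~\ref{thm:lem_in_ball}. The one hypothesis you must (and do) flag is dual feasibility: your LP step needs the outward normals to positively span $\R^d$, i.e.\ $P(t)$ bounded and nonempty, which is implicit in the paper's setting since otherwise the Hausdorff distance is infinite.
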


\begin{proof}
Suppose for a contradiction that at time $t_1$, the Hausdorff speed of $P$ increases from $v_1$ to $v_2$, where $v_1<v_2$. Then there exists a small enough $\gamma>0$ such that every point of $P(t_1)$ is at distance at most $\gamma v_1$ from $P(t_1-\gamma)$, and such that $P(t_1+\gamma)$ contains a point $\vec{p}$ at distance at least $\gamma v_2$ from $P(t_1)$. Let $\vec{q}$ be the point of $P(t_1)$ that is closest to $\vec{p}$, so $\|\vec{q}-\vec{p}\|=\gamma v_2$. Let $\vec{r}$ be a point of $P(t_1-\gamma)$ at distance at most $\gamma v_1$ from $\vec{q}$. Then $\|\vec{r}-\vec{p}\|\le \gamma(v_1+v_2)$. Hence, a moving point $\vec{z}$ that starts at $\vec{r}$ at time $t_1-\gamma$ and reaches $\vec{p}$ at time $t_1+\gamma$, moving at a constant speed along a straight line, exits $P$ and reenters it, contradicting Lemma~\ref{lemma_no_reenter}.
\end{proof}

\section{Experimental results}
\label{sec:exp}

\begin{figure}
  \centerline{\includegraphics{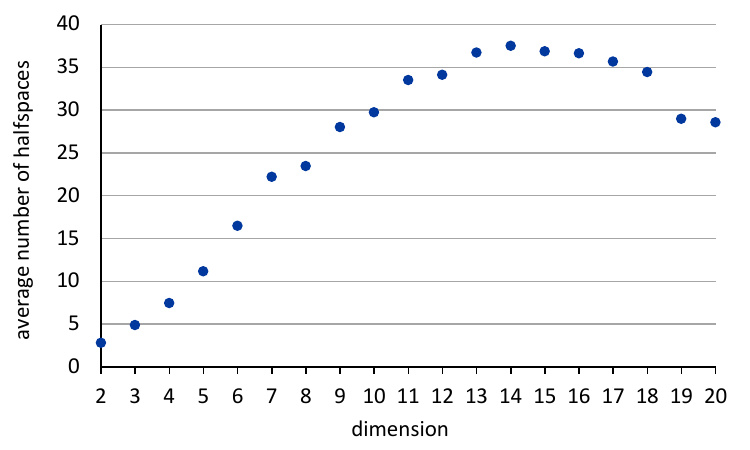}}
\caption{\label{fig:experiments} Average number of halfspaces returned by the heuristic.}
\end{figure}

Our greedy algorithmic approach motivates a simple heuristic for the 
construction of a consistent polyhedron: Given the $d$-dimensional point 
set $S$ and a run-time parameter $M$, at each iteration, we sample $M$ 
$d$-dimensional unit vectors by sampling each coordinate from the normal 
$N(0,1)$ distribution and then normalizing. Each sampled vector 
$\vec w_i$ implies a candidate direction of a $d$-dimensional halfspace, 
and we then translate the halfspace 
to distance $b_i \in [0,1]$ to the origin for the minimum $b_i$ which places all 
positive points on the inside of the halfspace (that is the side of the
origin). 
Having enumerated $M$ halfspaces of the form $(\vec w_i, b_i)$,
we select the halfspace which maximizes the number of negative points 
outside the halfspace, remove these negative points from 
$S$, and continue to the next iteration. The heuristic terminates when 
all negative points have been removed, and then outputs the chosen 
halfspaces.

As a proof of concept, we tested the above heuristic on randomly generated 
point sets and polyhedra in dimensions $d=2,\ldots,20$.
For each $d$, we first randomly sampled 1000 $d$-dimensional data vectors 
from within the unit sphere:
This is done by sampling each vector $p_i$ from the unit sphere as done above,
and then dividing the vector by $u_i^{1/d}$, where $u_i \in [0,1]$ is sampled 
randomly from the uniform distribution.\footnote{It is well-known (and easy to see) that
this method indeed uniformly samples from the unit ball:
the standard normal distribution is spherically symmetric, while the differential volume
of an infinitesimal spherical shell of radius $r$ scales as $r^d$ --- and hence taking the $d$th
root achieves uniformity.}
We then sampled $d$ halfspaces whose intersection forms the target polyhedron:
We sample a random direction vector 
$\vec w_j$
uniformly from the unit sphere, and then
sample a random offset value $b_j \in [.05,.95]$
to produce the halfspace $(\vec w_j, b_j)$.
The intersection of these $d$ halfspaces gives the target (open) polyhedron.
All data points inside the polyhedron with margin 0.05 were labelled 
as positive, all data points outside the polyhedron with margin 0.05 were 
labelled as negative, and the rest were discarded. 
We then ran the heuristic with parameter $M=10000$. Results are reported 
in Figure \ref{fig:experiments},
averaged over 100 trials per dimension.

The worst results were achieved at dimension 11, where the returned polyhedra were defined
by about three times more halfspaces than optimal. Thereafter the ratio decreases, and
by dimension 15 we actualy see a decrease in the number of halfspaces. We believe that 
this is due to known properties of high-dimensional balls: 
As the dimension grows, halfspaces that are offset away from the origin cut off a 
progressively smaller proportion of the ball's volume, and this results in many fewer 
negative points and a simpler algorithmic problem.

\section{Conclusions and future directions}

An interesting direction for future research is 
efficient learning of polyhedra defined by 
$k$ extremal points (as opposed to $k$ halfspaces).
\cite{Kupavskii20} has recently given an exponentially improved bound for the VC-dimension of these objects, but the question of learning and fat-shattering dimension are still open.

\subsubsection*{Acknowledgments}
We thank Sasho Nikolov, Bernd G\"artner and David Eppstein for helpful discussions. The 1st and 3rd authors were supported in part by
the Israel Science Foundation
(grant No. 1602/19), an Amazon Research Award,
and the BGU Data Science Research Center.

\bibliographystyle{spbasic}

\appendix

\section{Reduction of linear programming to the consistent hyperplane problem}\label{app}

The following argument is due to \citet{Nikolov18}, using techniques from~\citep{Chvatal18}.

In the consistent hyperplane problem, we are given two finite sets of points $P, Q\subset \R^d$,
and we want to find a hyperplane that strictly separates $P$ from $Q$. We show that, if there
exists a strongly polynomial time algorithm that solves the consistent hyperplane problem for the
special case $Q=\{\vec{0}\}$, then there exists a strongly polynomial time algorithm for LP.
The existence of the latter is a major open problem in complexity theory,
known as Smale's 9th problem \citep{MR1754783}.

Recall that the problem of finding a feasible solution to an LP is equivalent to the problem
of finding an optimal solution (see e.g.~\citep{MG_LP}).

\begin{lemma}\label{lemma_weak_to_strict}
Suppose there exists a strongly polynomial time algorithm $Z$ that, given a linear system of strict inequalities $A\vec{x} < \vec{b}$, finds a feasible solution if one exists. Then there exists a strongly polynomial time algorithm for LP (i.e.~for solving a system of the form $A\vec{x} \le \vec{b}$).
\end{lemma}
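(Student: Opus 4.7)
The plan is to build a strongly polynomial algorithm for deciding feasibility of $A\vec{x}\le\vec{b}$ by invoking $Z$ a polynomial number of times, using dimension reduction to make progress at each step.

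First, invoke $Z$ on the strict system $A\vec{x}<\vec{b}$. If $Z$ returns a strict solution $\vec{x}^*$, then $\vec{x}^*$ automatically satisfies $A\vec{x}^*\le\vec{b}$ and we are done. Otherwise $Z$ reports infeasibility of the strict system. In that case, either $A\vec{x}\le\vec{b}$ is itself infeasible, or at least one constraint must be an \emph{implicit equality}: an index $i$ such that $a_i\trn\vec{x}=b_i$ holds for every feasible $\vec{x}$. Indeed, if for every $i$ some feasible $\vec{x}_i$ satisfied $a_i\trn\vec{x}_i<b_i$, then the uniform average $m^{-1}\sum_i\vec{x}_i$ would strictly satisfy every constraint, contradicting the verdict of $Z$.

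Once an implicit equality $i$ is identified, we adjoin the equation $a_i\trn\vec{x}=b_i$ and use it to eliminate one of the $d$ variables via Gaussian substitution, obtaining an equivalent feasibility problem $A'\vec{x}'\le\vec{b}'$ in $d-1$ variables. Recursing on this smaller instance, we terminate after at most $d$ levels of reduction, at which point the ambient dimension collapses to zero and feasibility is trivially decided. Each level performs $\mathrm{poly}(m,d)$ arithmetic and makes a polynomial number of calls to $Z$, so the total work is strongly polynomial.

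The main obstacle is the implicit-equality step: locating such an $i$ using only the strict-feasibility oracle, without introducing numeric perturbations whose magnitudes depend on the bit-length of $(A,\vec{b})$. A natural route is via the Motzkin--Gordan-type dual characterization of strict infeasibility: $A\vec{x}<\vec{b}$ is infeasible iff there exists $\vec{y}\ge\vec{0}$, $\vec{y}\neq\vec{0}$, with $A\trn\vec{y}=\vec{0}$ and $\vec{b}\trn\vec{y}\le0$; any such $\vec{y}$ exposes an implicit equality at every $i$ with $y_i>0$, since for any feasible $\vec{x}$ the chain $0=\vec{y}\trn A\vec{x}\le\vec{y}\trn\vec{b}\le0$ forces $y_i(b_i-a_i\trn\vec{x})=0$. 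Extracting such a $\vec{y}$---either directly from the infeasibility certificate produced by $Z$, or, failing that, by polynomially many auxiliary invocations of $Z$ that iteratively localise its support---is the delicate step where the strongly polynomial assumption on $Z$ is genuinely used.
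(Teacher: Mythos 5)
Your high-level plan matches the paper's: use the strict-feasibility oracle $Z$ to discover implicit equalities in $A\vec{x}\le\vec{b}$, substitute them out to shrink the instance, and recurse. Your averaging argument that \emph{some} implicit equality must exist whenever $\mathcal L$ is feasible but $A\vec{x}<\vec{b}$ is not is correct.

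However, there is a genuine gap at exactly the point you flag as ``delicate'': you do not actually give an algorithm to \emph{locate} an implicit equality. The lemma's hypothesis only says that $Z$ finds a strict solution when one exists; $Z$ is not assumed to emit a Gordan/Farkas certificate $\vec{y}$ on failure, so ``extract $\vec{y}$ from the infeasibility certificate produced by $Z$'' is not available. Your fallback, ``polynomially many auxiliary invocations of $Z$ that iteratively localise its support,'' is not an algorithm --- it is the step that needs to be supplied, and it is not obvious how to do it by probing only the \emph{full} strict system or by perturbations (which you rightly rule out on strong-polynomiality grounds).

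The paper closes this gap by querying $Z$ on strict systems built from \emph{subsets} of the constraints, which is the idea your proposal is missing. Define $\mathcal L_{(S)}=\{A_i\vec{x}<b_i : i\in S\}$ (constraints outside $S$ are simply dropped, not weakened). Greedily build a maximal $S$ with $\mathcal L_{(S)}$ strictly feasible, using at most $m$ calls to $Z$: process $i=1,\dots,m$ and add $i$ to $S$ iff $\mathcal L_{(S\cup\{i\})}$ is feasible. Then every $j\notin S$ is an implicit equality for $\mathcal L$: if some feasible $\vec{x}$ had $A_j\vec{x}<b_j$, then for a strict solution $\vec{y}$ of $\mathcal L_{(S)}$ the point $\eps\vec{y}+(1-\eps)\vec{x}$, $\eps$ small, would strictly satisfy $S\cup\{j\}$, contradicting maximality. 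If $|S|=m$ the returned point already solves $\mathcal L$; otherwise eliminate variables via $A_j\vec{x}=b_j$ for $j\notin S$ and recurse. This is the concrete, certificate-free realization of your implicit-equality step; with it, your outline becomes a complete proof.
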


\begin{proof}
Given $Z$, we construct an algorithm $W$ that, given an LP $\mathcal L$, either finds a feasible solution for $\mathcal L$ or reduces $\mathcal L$ to an equivalent LP $\mathcal L'$ with fewer variables and constraints. Hence, repeated application of $W$ will produce a solution to $\mathcal L$.

Let $\mathcal L = \{A\vec{x} \le \vec{b}\}$ be the given LP, with $A\in\Q^{m\times n}$. Given a set of indices $S\subseteq \{1, \ldots, m\}$, let $\mathcal L_{(S)} = \{A_i\vec{x} < b_i : i\in S\}$ (i.e.~$\mathcal L_{(S)}$ contains $|S|$ strict inequalities and no other constraints). We say that $S$ is \emph{maximally feasible} if $\mathcal L_{(S)}$ is feasible but no $\mathcal L_{(T)}$ is feasible for $T \supsetneq S$.

It is easy to find a maximally feasible set $S$ by making at most $m$ queries to $Z$. Now we claim that, if $S$ is maximally feasible, then every solution $\vec x$ to the original system $\mathcal L$ must have equality $A_j\vec{x} = b_j$ for every $j\notin S$: Indeed, suppose for a contradiction that $A_j\vec{x} < b_j$ for some $j\notin S$. Let $\vec y$ be a feasible solution to $\mathcal L_{(S)}$. Then the convex combination $\eps \vec y + (1-\eps) \vec x$, for small enough $\eps>0$, is a feasible solution to $\mathcal L_{(S \cup \{j\})}$.

Hence, $W$ first finds a feasible set $S$. If $|S|=m$, then the solution to $\mathcal L_{(S)}$ found by $Z$ is also a solution to $\mathcal L$. Otherwise, for every $j\notin S$ we solve for a variable in the equation $A_j\vec{x} = b_j$ and substitute it into the other constraints of $\mathcal L$. This way, we produce a smaller LP $\mathcal L'=\{A'\vec x'\le b'\}$, a solution to which yields a solution to $\mathcal L$.
\end{proof}

\begin{lemma}
Suppose there exists a strongly polynomial time algorithm $Y$ that, given a set of points $P$, finds a hyperplane that strictly separates $P$ from $Q=\{\vec 0\}$, if one exists. Then there exists a strongly polynomial time algorithm $Z$ as in Lemma~\ref{lemma_weak_to_strict}.
\end{lemma}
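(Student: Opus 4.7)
The plan is to reduce feasibility of an arbitrary system of strict linear inequalities $A\vec{x}<\vec{b}$ (the input to the desired $Z$) to a single call of the hyperplane-from-origin separator $Y$, via a homogenization step followed by a Gordan-type duality step.

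First I would homogenize. Introduce a scalar $\lambda$ and form the $(m+1)\times(n+1)$ matrix
\[
C=\begin{pmatrix} A & -\vec{b}\\ \vec{0}\trn & -1\end{pmatrix}.
\]
A routine check shows that a vector $\vec{z}=(\vec{x},\lambda)\in\R^{n+1}$ satisfies $C\vec{z}<\vec{0}$ iff $\lambda>0$ and $A(\vec{x}/\lambda)<\vec{b}$; conversely any $\vec{x}_0$ with $A\vec{x}_0<\vec{b}$ lifts to the solution $(\vec{x}_0,1)$ of the homogeneous system. Hence feasibility of $A\vec{x}<\vec{b}$ reduces, in strongly polynomial time, to finding some $\vec{z}\in\R^{n+1}$ with $C\vec{z}<\vec{0}$.

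Next I would apply Gordan's theorem of the alternative: the strict homogeneous system $C\vec{z}<\vec{0}$ has a solution iff there is no non-zero $\vec{y}\ge\vec{0}$ with $C\trn\vec{y}=\vec{0}$, equivalently iff $\vec{0}\notin \mathrm{conv}(P)$, where $P\subset\R^{n+1}$ is the finite set of rows of $C$. Since $\mathrm{conv}(P)$ is compact, this last condition is equivalent to the existence of a hyperplane that strictly separates $P$ from $\{\vec{0}\}$. Therefore a single invocation of $Y$ on the point set $P$ decides the original instance: if $Y$ reports no separator, the original LP is infeasible; if $Y$ returns $(\vec{w},c)$ with $\vec{w}\trn\vec{p}_i>c>0$ for every $\vec{p}_i\in P$, then setting $\vec{z}=-\vec{w}$ gives $\vec{c}_i\trn\vec{z}<-c<0$ for every row $\vec{c}_i$ of $C$, so $C\vec{z}<\vec{0}$; writing $\vec{z}=(\vec{x},\lambda)$, the final row forces $\lambda>0$, and $\vec{x}/\lambda$ is a solution of $A\vec{x}<\vec{b}$.

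Every arithmetic step (building $C$, negating the returned normal, extracting the first $n$ coordinates, and the final division by $\lambda$) is strongly polynomial, so composing this reduction with $Y$ yields the required $Z$. The part I expect to need the most care is the Gordan-duality step: one has to argue cleanly that $Y$'s negative answer is a bona fide certificate of infeasibility for the original LP, so that compactness of $\mathrm{conv}(P)$ and the finite-dimensional separating-hyperplane theorem are invoked exactly once and in the correct direction, without tacitly appealing to general LP duality (which would defeat the purpose of the reduction).
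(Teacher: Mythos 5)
Your proof is correct and uses essentially the same reduction as the paper: you form the point set from the rows of the homogenized matrix $C$ together with the auxiliary row $(\vec 0,-1)$ that forces the last coordinate positive, call $Y$ once, and rescale; the paper uses the negated points $\{(-A_i,b_i)\}\cup\{(\vec 0,1)\}$ and so reads off the solution from $\vec w$ directly rather than from $-\vec w$, a cosmetic sign difference. The one thing you add is an explicit Gordan-type argument justifying that a "no separator" answer from $Y$ certifies infeasibility of the original strict system; the paper leaves this direction implicit, so your version is slightly more complete on that point.
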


\begin{proof}
Given a strict system $A\vec x<\vec b$ with $A\in\Q^{m\times n}$, we query $Y$ with the point set $P = \{(-A_i,b_i) : 1\le i \le m\}\cup\{(\vec 0, 1)\}$. $Y$ will return $\vec w = (\vec x', t)$ such that $\vec p\cdot\vec w>0$ for all $\vec p\in P$. In particular, $t>0$. Hence, $\vec x = \vec x'/t$ is a solution to our system. 
\end{proof}
\end{document}